\newcommand{\cA}{\mathcal{A}}
\newcommand{\cB}{\mathcal{B}}
\newcommand{\cI}{\mathcal{I}}
\newcommand{\cP}{\mathcal{P}}
\newcommand{\nat}{\mathbb{N}}
\newcommand{\rationals}{\mathbb{Q}}
\newcommand{\primes}[1]{{\mathcal{P}}_{#1}}
\newcommand{\integers}{\mathbb{Z}}
\newcommand{\rat}{\mathbb{Q}}
\newcommand{\ud}{\stackrel{\rm\scriptscriptstyle def}{=}}
\renewcommand{\ud}{\coloneqq}
\newcommand{\paths}{\text{paths}}
\newcommand{\initial}{\text{i}}
\newcommand{\final}{\text{f}}
\newcommand{\vect}{\text{v}}
\newcommand{\argmax}{\mathop{\mathrm{argmax}}}
\newcommand{\source}{\text{s}}
\newcommand{\dest}{\text{d}}
\newcommand{\chr}{\mathit{cr}}
\newcommand{\mutp}{\mathit{mp}}
\newcommand{\conf}[1]{{\color[rgb]{0.4,0.4,0.4} \,$\pm$ #1}}
\newcommand{\maxi}[1]{\:({\bf #1})}
\newcommand{\size}[2]{{\fontsize{#1}{9}\selectfont #2}}
\newcommand{\smallf}{\text{\size{5.5}{$f$}}}
\newcommand{\fmapsto}{\stackon[-0.1pt]{\,$\mapsto$\,}{\,\smallf}}
\DeclarePairedDelimiter\floor{\lfloor}{\rfloor}
\begin{document}
\title{Genetic Algorithm for the Weight Maximization Problem on Weighted Automata}

\author{Elena Guti\'{e}rrez}
\orcid{0000-0001-5999-7608}
\affiliation{%
  \institution{IMDEA Software Institute, Spain}
}
\affiliation{%
  \institution{Universidad Polit\'{e}cnica de Madrid, Spain}
}
\email{elena.gutierrez@imdea.org}

\author{Takamasa Okudono}
\affiliation{%
  \institution{National Institute of Informatics, Japan}
}
\affiliation{%
  \institution{The Graduate University for Advanced Studies, Japan}
}
\email{tokudono@nii.ac.jp}

\author{Masaki Waga}
\affiliation{%
  \institution{National Institute of Informatics, Japan}
}
\affiliation{%
  \institution{The Graduate University for Advanced Studies, Japan}
}
\email{mwaga@nii.ac.jp}

\author{Ichiro Hasuo} 
\affiliation{%
  \institution{National Institute of Informatics, Japan}
}
\affiliation{%
  \institution{The Graduate University for Advanced Studies, Japan}
}
\email{hasuo@nii.ac.jp}

\renewcommand{\shortauthors}{E. Guti\'{e}rrez et al.}

\begin{abstract}
  The weight maximization problem (WMP) is the problem of finding the word of highest weight
  on a weighted finite state automaton (WFA).
  It is an essential question that emerges in many optimization
  problems in automata theory.
  Unfortunately, the general problem can be shown to be undecidable, whereas its bounded
  decisional version is NP-complete.
  Designing efficient algorithms that produce approximate solutions to the WMP in reasonable time
  is an appealing research direction that can lead to several new applications including
  formal verification of systems abstracted as WFAs.
  In particular, in combination with a recent procedure that translates a recurrent neural network into a
  weighted automaton, an algorithm for the WMP can be used to analyze and verify the network
  by exploiting the simpler and more compact automata model.

  In this work, we propose, implement and evaluate a metaheuristic based on genetic algorithms to
  approximate solutions to the WMP.
  We experimentally evaluate its performance on examples from the literature and show
  its potential on different applications.
\end{abstract}
%
%
\begin{CCSXML}
<ccs2012>
   <concept>
       <concept_id>10003752.10003766.10003773.10003775</concept_id>
       <concept_desc>Theory of computation~Quantitative automata</concept_desc>
       <concept_significance>500</concept_significance>
       </concept>
   <concept>
       <concept_id>10003752.10010070.10011796</concept_id>
       <concept_desc>Theory of computation~Theory of randomized search heuristics</concept_desc>
       <concept_significance>500</concept_significance>
       </concept>
 </ccs2012>
\end{CCSXML}
\ccsdesc[500]{Theory of computation~Quantitative automata}
\ccsdesc[500]{Theory of computation~Theory of randomized search heuristics}

\keywords{Weighted automata, Genetic algorithms, Metaheuristics, Recurrent neural networks}

\copyrightyear{2020}
\acmYear{2020}
\acmConference[GECCO '20]{Genetic and Evolutionary Computation Conference}{July 8--12, 2020}{Canc\'{u}n, Mexico}
\acmBooktitle{Genetic and Evolutionary Computation Conference (GECCO '20), July 8--12, 2020, Canc\'{u}n, Mexico}
\acmDOI{10.1145/3377930.3390227}
\acmISBN{978-1-4503-7128-5/20/07}

\maketitle

\begin{acks}
Part of the research was conducted during E.G.'s internship at National Institute of Informatics, Japan.  
The authors are supported by ERATO
HASUO Metamathematics for Systems Design Project (No.~\grantnum{}{JPMJER1603}), JST, and by JSPS Grants-in-Aid No.~\grantnum{}{15KT0012} \&~\grantnum{}{18J22498}.
E.G. is also supported by \grantnum{}{BES-2016-077136} grant from the Spanish Ministry of Economy, Industry and Competitiveness.
\end{acks}

\section{Introduction}

\paragraph{Background}
Finite-state automata are transition systems that \emph{accept} or \emph{reject} words from a given alphabet of symbols.
A more general notion is that of \emph{weighted} finite-state automata (WFA).
These are automata where transitions and states are augmented with a weight from the real numbers.
Thus, they do not simply accept or reject words, but they induce a real function from words over the alphabet to weights.
The weight of a word is computed by adding together the weights of all executions that are labeled with that word, where the weight of each single execution
is obtained by multiplying the weights of the transitions that composes it, together with the weight of the initial and final state of the execution.

The WFA model has been extensively studied in the literature~\cite{Droste2009} and has found a great number of modern applications in speech recognition~\cite{Mohri97}, digital image compression~\cite{BaderHS04,Hafner98}, sequence prediction~\cite{CortesHM04} and optical character recognition~\cite{KolakBR03}; as well as in formal verification where they are used for the verification of quantitative systems~\cite{ChatterjeeDH08,DrosteG05,Schutzenberger61b}.
These applications have also enhanced the use of \emph{weighted automata learning} to abstract more complex systems as WFAs over the reals by approximating a real-valued target function, using as training sample a finite set of pairs of words and target values~\cite{Balle2015}.
Recently, this technique has been successfully used to extract WFAs from real-output recurrent neural networks (RNN)~\cite{tmp-Takamasa2019}.
The result is a simpler, compact and more interpretable transition system (compared to the original RNN)
in which inference can be performed up to three orders of magnitude faster, as empirical results show~\cite{tmp-Takamasa2019}.
Furthermore, these systems admit particular optimization techniques for their analysis such as \emph{memoization} of partial executions.

\paragraph{The problem}
In this work, we are interested in the problem of finding the word with the highest weight in a WFA, known as the \emph{weight maximization problem} (WMP).
This problem has drawn attention before, especially in the context of natural language processing and speech recognition~\cite{Mohri02,Higuera13,Higuera14}.
In that case, the subclass of \emph{probabilistic} WFA is typically used to represent different levels of the recognition task.
Thus, the highest-probability string corresponds to the most likely translation for an observed input sequence.
Several algorithms have been proposed to solve this problem, but unfortunately they usually rely on the special structure of probabilistic WFAs, which enables techniques that are out of hand if one considers more general weighted automata.
In fact, we are interested in WFAs such as the ones that are obtained as output of real-weighted automata learning techniques.
In particular, we will show the potential application of an algorithm to solve the WMP when combined with the  WFA extraction procedure for RNNs\cite{tmp-Takamasa2019} to enable a more efficient ``light-weight verification" of the network.

When tackling the WMP, it is easy to observe that if the range of the weight function described by the WFA is unbounded  over the reals, then the notion of \emph{the word with the highest weight} is somehow vague.
Furthermore, it is well-known that even if one bounds the range of the weight function, the associated decision problem is undecidable~\cite{Paz1971,Blondel2003,Gimbert2010}.
In consequence, we bound the domain of the problem, namely, we restrict our search to words of \emph{bounded length}.
This decision is also supported by the idea that for verification purposes, we are not interested in extremely long witnesses (words), in general.
We call this bounded version of the problem, the \emph{bounded weight maximization} problem (BWMP) over WFAs.

\paragraph{Our Contribution}
First, we present a metaheuristic based on the genetic algorithm (GA) for approximating a solution to the bounded weighted maximization problem.
Second, we experimentally evaluate the algorithm on examples from the literature and show its potential for estimating the error between a WFA extracted from an automata learning technique and the target function.
In particular, we will focus our attention on WFAs that result from the extraction procedure applied to RNNs~\cite{tmp-Takamasa2019}.
As an independent contribution we show that the bounded WMP is NP-complete by means of a reduction from the Hamiltonian path problem,
even in the case where weights are over $\integers$ and the alphabet contains only two symbols.
Next we give further details on each of the contributions.

The choice of a genetic-algorithm-based metaheuristic for this purpose is motivated by the idea that it is simple and natural to code elements of the language of a WFA as strings over an alphabet of symbols, with its weight as the fitness function.
Also, the way in which words are \emph{read} in a WFA by repeating patterns, i.e., paths and cycles in its finite structure, suggests that higher-weighted words might be those repeating higher-weighted patterns in the WFA.
This intuition resembles to the fundamental basis of genetic algorithms that relies on the idea that combining building blocks allows to build strings with higher expected performance.
In order to improve the time performance of the algorithm, we exploit the matrix representation of the WFA model.
Using this representation, a WFA is described as a finite set of transition matrices, one for each alphabet symbol, with values over the reals indicating the weight of each transition, plus an initial and final real vector indicating the weight of the initial and final states.
Thus, computing the weight of a word amounts to multiply the set of matrices corresponding to the sequence of symbols given by the word together with the initial and final vector.
We show the potential of using \emph{memoization} to accelerate the computation of these products relying on the associativity of matrix multiplication, a technique that is not at hand if one uses directly the RNN weight function (instead of the WFA) as fitness function of the GA.
Notice that a genetic-algorithm-based metaheuristic provides several good approximations of the word with the highest weight (in general, as many as the size of the population).
This makes the approach convenient for our  purpose of applying our algorithm to the verification of RNN as we may obtain a number of witnesses of the behavioral difference between the RNN and the specification.

We perform an empirical evaluation of our algorithm conducted by the answer of two main questions:
\begin{inparaenum}[\upshape(1)]
\item is our genetic-algorithm-based metaheuristic a good fit for approximating the BWMP?
\item given that memoization of partial executions is an appropriate optimization for WFAs, what is an lower bound on the gain of using this technique?
\end{inparaenum}

Regarding the first question, we compare the performance of our algorithm against random search.
The results shows that our method outperforms the latter, concluding that, since WFAs are not completely black-boxes, the GA is able to successfully exploit to some extent the internal structure of these devices.
Furthermore, we evaluate the quality of the solutions found by our algorithm by designing an experiment where the word with the highest weight is known.
We observe that in many cases our algorithm finds the word with the highest weight, and we identify the hardest cases as those where the WFA has several local maxima corresponding to words that significantly differ from the word with the highest weight.

On the second question, our experiments show that by using a simple memoization technique the number of words analyzed per second is more than 3 times that of a non-optimized algorithm, which evidences the potential of this technique in the context of WFAs.

Finally, we conduct a case study in order to solve a third research question:
\begin{inparaenum}[\upshape(3)]
\item how to exploit our algorithm for performing light-weight verification of RNNs?
\end{inparaenum}

We first show that an algorithm to approximate the WMP allows to compute a simple notion of \emph{distance} between WFAs.
Namely, given two WFAs \(\cA\) and \(\cB\), we define the distance between \(\cA\) and \(\cB\) as the weight that maximizes the difference between the two automata in absolute value.
Intuitively, this notion of distance gives the maximum ``error'' of \(\cA\) approximating \(\cB\).
In combination with the WFA extraction procedure for RNNs, this enables to perform a  light-weight verification of the RNN as it allows to estimate the error between the RNN behavior and the function it models, as well as to identify input sequences whose output significantly differs the specification, i.e., input sequences that are misclassified by the RNN.
It is worth to notice that the accuracy of the WFA extracted from the RNN is high~\cite{tmp-Takamasa2019}, but yet an \emph{approximation} and thus, despite of being practical to gain confidence about the correctness of the network, it is especially useful to detect misclassified input sequences.
Also, note that WFAs are strictly less expressive than RNNs as they recognize the subclass of weighted \emph{regular} languages.
Therefore, the verification of the RNN we propose is always against a weighted regular specification, and not \emph{any} possible specification.
As a case of study satisfying this condition, we show that an RNN trained to learn a weighted regular language, namely a weighted variant of the language of well-parenthesized words, satisfies its specification on a bounded-length set of words.
Another application of our algorithm in the context of network analysis is that of determining whether, given a prefix of an RNN input, there exists a sequence of input symbols that can be read after and makes the RNN output reach a certain threshold value.
With an RNN abstracted as a WFA, one can compute the set of states that are reachable reading the given prefix and define them as the new initial states of the automaton.
Then, apply the algorithm for solving the WMP in the latter to approximate the maximum weight and compare it with the threshold.
This has particular application in RNNs that are trained for anomaly detection~\cite{LvWYL18,XiaoZMHS19}.

\paragraph{Related Work}
The problem of determining the \emph{best} string on a weighted automata, i.e., the word with the highest weight, has been previously studied in the literature for the subclass of \emph{generative}\footnote{This is in contrast to the definition of probabilistic automata first introduced by Rabin~\cite{Rabin63} and the one we will use in this paper, which interprets automata as \emph{accepting} devices. For generative probabilistic WFAs, the probability of a word \(w\) can be interpreted as the probability of reaching a final state when \(w\) is used as a \emph{scheduling policy}.} \emph{probabilistic} WFAs~\cite{Mohri02,Higuera13,Higuera14}.
These are WFAs that define a probability distribution over the words of the alphabet and they typically emerge in the context of speech recognition and natural language applications.
Mohri and Riley~\cite{Mohri02} present an efficient algorithm for solving the \(n\)-best-string problem, in order to determine the best hypothesis (or \(n\)-best different hypotheses) among all those considered by the recognizer.
Their method relies on two general algorithms for WFAs: determinization and a general \(n\)-shortest-path algorithm.
However, their WFAs are always acyclic, so-called \emph{lattices}, typically used in speech recognition (as opposed to our more general class of WFAs which typically contain cycles, especially those coming from the extraction procedure for RNNs), which guarantees the termination of the determinization procedure.
Note that if our WFAs were deterministic, then the WMP would become tractable since it amounts to compute the \emph{longest} path in a weighted graph from an initial to a final state.
However, the weighted automata we are interested in are not deterministic in general, not even \emph{determinizable}.
Characterizing classes of weighted automata with real weights that admit an equivalent deterministic version is an interesting
research direction on which not much progress has been made~\cite{Mohri2003}.

De la Higuera and Oncina~\cite{Higuera13,Higuera14} also tackle the problem of finding the best string in generative probabilistic weighted automaton, motivated by natural language processing applications.
On the other hand, Casacuberta and De la Higuera~\cite{CasHig2000} proved that the threshold reachability problem for probabilistic weighted automata
is NP-hard even when you bound the length of the most probable string.
They show how to reduce any instance of the \emph{satisfiability problem} to the best string problem on a generative probabilistic automata
with four alphabet symbols.
We pursue the study of the hardness of the best string problem and show that the problem is NP-complete in the case of
automata with weights over $\integers$ and just two alphabet symbols (see Section~\ref{app:np-completeness} in the Appendix).
Given the hardness of these problems, it is remarkable that De la Higuera and Oncina~\cite{Higuera13}
provide an algorithm for the best string problem whose efficiency depends on the probability of the best (or most probable) string itself.
However, this result strongly relies on the probabilistic nature of this subclass of WFAs.
In the case of general WFAs over the rational numbers, to the best of our knowledge, no other algorithm for the input maximization has been proposed before.

Some works explore other notions of distance between WFAs, e.g., the \emph{bisimulation metric}~\cite{Balle17},
a non-computable distance that is based on the joint spectral radius of the transition matrices of the WFA.
However, this notion does not fit our purposes as it describes a universal property rather than an existential property.
Namely, it defines a property of two automata w.r.t. all possible words over an alphabet rather than identifying a single word
with a certain property.

\section{Preliminaries}
\label{section:preliminaries}
\subsection{Languages}
Let \(\Sigma\) be a finite \emph{alphabet} of symbols.
A \emph{word} \(w\) over the alphabet \(\Sigma\) is a finite sequence of symbols \(w \ud a_1 \cdots a_{n}\) with \(a_i \in \Sigma\) for each \(i \in \{1, \ldots, n\}\).
In that case, we say that \(n\) is the \emph{length} of \(w\) and we denote it by \(|w| = n\).
We define the language \(\Sigma^*\) as the set of all words over the alphabet \(\Sigma\), including the \emph{empty string} \(\varepsilon\), whose length is \(0\).
Given a natural value \(k \geq 1\), we define the finite language \(\Sigma^{\leq k} \ud \{w \in \Sigma^* \mid 1 \leq |w| \leq k\}\).
We define the \emph{size} of a language \(L \subseteq \Sigma^*\), denoted by \(|L|\), as the cardinality of the set \(L\).
Finally, given two words \(v = a_1 \cdots a_{n_1}, w = b_1 \cdots b_{n_2} \in \Sigma^*\), \(v\cdot w \ud a_1 \cdots a_{n_1}\cdot b_1 \cdots b_{n_2}\) denotes the word that results from concatenating \(w\) after \(v\).

\subsection{Matrices and Vectors}
Despite of the fact that our model of WFAs is defined over the real semiring, for computational reasons we will give definitions over the \emph{rational} numbers.
Given a column vector \(\vect \in \rationals^{d}\), with \(d \geq 1\), we denote by \(\vect^\top\) the \emph{transpose} of \(\vect\).
Given \(d_1, d_2 \geq 1\), we denote by \(0_{d_1 \times d_2}\) the matrix of dimension \(d_1 \times d_2\) with all its entries equal to \(0\).
If \(d_1 = d_2\), we simply write \(0_{d_1}\).
Similarly, \(\overline{0}_d\) denotes the column vector of dimension \(d \geq 1\) with all its entries equal to \(0\).
\subsection{Weighted Automata}
A \textit{weighted automaton} \(\cA\) (WFA for short) over \(\rationals\) is a 5-tuple \(\cA = (Q, \Sigma, \{M_{a}\}_{a \in \Sigma}, \initial, \final)\) where \(Q\) is a finite set of \emph{states}, \(\Sigma\) is a finite alphabet of symbols, \(\initial\) and \(\final\) are both column vectors in \(\rationals^{|Q|}\) called the \emph{initial} and the \emph{final vector} respectively, and, for each \(a \in \Sigma\), \(M_a\) is a square matrix in \(\rationals^{|Q| \times |Q|}\) called the \emph{transition matrix of \(a\)}.

The \emph{weight} of a word \(w \in \Sigma^*\) w.r.t. \(\cA\) is defined as follows.
Let \(w = a_1\,a_2\cdots a_n\) with \(a_i \in \Sigma\):
\begin{equation}
\label{eq:weight_word}
W_{\cA}(w) \ud \initial^{\top} \cdot \prod_{i=1}^{n} M_{a_i} \cdot \final \enspace ,
\end{equation}
Now we give an alternative way to define the weight of a word that, instead of using the matrix representation of the WFA, uses its description as a transition system, with the only purpose of providing further intuition on this notion.
First, let us give some previous definitions and notation.
For each \(q, q' \in Q\), we will denote the \(q\)-component of a vector \(\vect \in \rationals^{|Q|}\) by \(\vect(q)\), and the \((q,q')\)-entry of a matrix \(M \in \rationals^{|Q| \times |Q|}\) by \(M(q, q')\).
Define the \emph{transition set} of \(\cA\) as \(\delta \ud \{(q,a,q') \mid M_a(q,q') \neq 0 \} \subseteq Q \times \Sigma \times Q\) and the \emph{weight of a transition} \(\sigma = (q,a,q')\) as \(W_\cA(\sigma) \ud M_a(q,q')\).
We say that \(q\) and \(q'\) are the \emph{source} and the \emph{destination} state of \(\sigma\), respectively.
The transition set of \(\cA\) allows us to define the notion of a \emph{path} of \(\cA\) as follows.
A \emph{path} \(\pi = (q_0, a_1, q_1)(q_1, a_2, q_2) \cdots (q_{n-1}, a_{n}, q_n)\) \((n\geq 1)\) of a \(\cA\) is an element of \(\delta^*\) of \emph{consecutive} transitions, i.e., the destination state of every transition in the sequence (except the last one) coincides with the source state of the next transition in the path.
In the latter case, we say that \(\pi\) \emph{reads} the word \(a_1\,a_2\cdots a_{n}\).
Given \(S,D \subseteq Q\) and \(w \in \Sigma^*\), denote by \(\paths_\cA(S,w,D)\) the set of all paths from a state in \(S\) to a state in \(D\) reading \(w\).
Finally, when \(S = \{q \in Q \mid \initial(q) \neq 0\}\) and \(D = \{q \in Q \mid \final(q) \neq 0\}\), we denote \(\paths_\cA(S, w, D)\) simply by \(\paths_\cA(w)\).
Define the \emph{weight of a path} \(\pi = \sigma_1 \cdots \sigma_n\) as:
\[W_{\cA}(\pi) \ud \prod_{i=1}^{n} W_{\cA}(\sigma_i) \enspace .\]
Finally, define the weight of a word as follows:
\[W_{\cA}(w) \ud \sum_{\pi \in \paths_\cA(w)} W_{\cA}(\pi) \enspace .\]

We will often use the notation \(\cA(w)\) to denote the weight \(W_{\cA}(w)\).
We will refer to a subclass of WFAs, namely, \emph{probabilistic weighted automata} (PWFAs).
A PWFA \(\cP = (Q, \Sigma, \{M_{a}\}_{a \in \Sigma}, \initial, \final)\) over \(\rationals\) is a weighted automaton satisfying the following properties:
\begin{inparaenum}[\upshape(\itshape i\upshape)]
\item there is exactly one state \(q \in Q\) such that \(\initial(q) = 1\) and, for all \(q' \in Q\setminus\{q\}: \initial(q) = 0\),
\item for each \(q \in Q: \final(q) \in \{0,1\}\), and
\item for each \(q \in Q\) and  \(a \in \Sigma: \sum_{q'\in Q} M_a(q,q') = 1\).
\end{inparaenum}
If \(\final(q) = 1\) then \(q\) is an \emph{accepting} state, otherwise, it is \emph{non-accepting}.
\subsection{Operations on WFAs}
\label{app:operations}

Let us recall the following operations over WFAs that we will use in Section~\ref{section:applications}, and in the Appendix.
Let us define the following WFAs: \(\cA = (Q, \Sigma, \{M_a\}_{a \in \Sigma}, \initial, \final)\), \(\cA_1 = (Q_{1}, \Sigma, \{M^{(1)}_a\}_{a \in \Sigma}, \initial_1, \final_1)\) and  \(\cA_2 = (Q_2, \Sigma, \{M^{(2)}_a\}_{a \in \Sigma}, \initial_2, \final_2)\).

\begin{definition}[Unary subtraction WFA]
Given a WFA \(\cA\), define the WFA \(\ominus \cA \ud (Q, \Sigma, \{M_a\}_{a \in \Sigma}, -\initial, \final)\).
\end{definition}
Note that \(W_{\ominus \cA}(w) = -W_{\cA}(w)\), for each \(w \in \Sigma^*\).

\begin{definition}[Sum of WFAs]
Given WFAs \(\cA_1\) and \(\cA_2\), define the WFA \(\cA_1 \oplus \cA_2 \ud (Q_1 \cup Q_2, \Sigma, \{M'_a\}_{a\in\Sigma}, \initial', \final'))\) where \(\initial' \ud (\initial_1\,\,\initial_2)\), \(\final' \ud (\final_1\,\,\final_2)\) and
\begin{displaymath}
M'_a  \ud
\begin{bmatrix}
M^{(1)}_a  & 0_{d_1\times d_2} \\[0.6pt]
0_{d_2 \times d_1}  & M_{a}^{(2)}
\end{bmatrix},
 \text{ for each }a \in \Sigma \enspace .
\end{displaymath}
\end{definition}
Note that \(W_{\cA_1 \oplus \cA_2}(w) = W_{\cA_1}(w) + W_{\cA_2}(w) \), for each \(w \in \Sigma^*\).

\begin{definition}[Subtraction of WFAs]
Given WFAs \(\cA_1\) and \(\cA_2\), define the WFA \(\cA_1 \ominus \cA_2 \ud \cA_1 \oplus (\ominus \cA_2)\).
\end{definition}
Note that both WFAs \(\cA_1 \oplus \cA_2 \) and \(\cA_1 \ominus \cA_2 \) have \(|Q_{\cA}| + |Q_{\cB}|\) states.
\begin{definition}[Product of WFAs]
Given WFAs \(\cA_1\) and \(\cA_2\), define the WFA \(A \otimes B \ud (Q_1 \times Q_2, \Sigma, \{M'_a\}_{a\in \Sigma}, \initial', \final')\) where \(\initial' \ud \initial_1\otimes\initial_2, \quad \final' \ud \final_1\otimes\final_2 \quad \text{ and } \quad
M'_a  \ud M^{(1)}_a \otimes M^{(2)}_a \text{ for each }a \in \Sigma\),
where \(\otimes\) denotes the Kronecker product of matrices.
\end{definition}
Note that \(W_{\cA_1 \otimes \cA_2}(w) = W_{\cA_1}(w) \cdot W_{\cA_2}(w) \), for each \(w \in \Sigma^*\).

\begin{definition}[Sum of a WFA and a real]
Given a WFA \(\cA\) and \(\alpha \in \rationals\), define the WFA \(A \oplus \alpha \ud (Q \cup \{q_\alpha\}, \Sigma, \{M_a'\}_{a\in\Sigma}, \initial', \final')\) where \(Q \cap \{q_{\alpha}\} = \emptyset\), \(\initial' \ud (\initial \,\alpha)\), \(\final' \ud (\final\, 1)\) and
\begin{displaymath}
M_a'  \ud
\begin{bmatrix}
M_a  & \overline{0}^{\top}_{d} \\[0.6pt]
\overline{0}^{\top}_{d}  & \alpha
\end{bmatrix},
 \text{ for each }a \in \Sigma \enspace .
\end{displaymath}
\end{definition}
Notice that \(W_{\cA \oplus \alpha}(w) = W_{\cA}(w) + \alpha\), for each \(w \in \Sigma^*\).
Finally, note that all the binary operations defined below are commutative except from the product of WFAs.

\section{The Weight Maximization Problem}
\label{section:WMP}
We are interested in computing the word with the highest weight in a WFA with weights over \(\rationals\).
This is the so-called \textit{Weight Maximization Problem}.
\begin{definition}[Weight Maximization Problem]
Given a WFA \(\cA\), the \emph{Weight Maximization Problem} (WMP) consists in computing a word \(w_0\) such that \(W_{\cA}(w_0) \geq W_{\cA}(w)\), for all \(w \in \Sigma^*\).
\end{definition}

Observe that if the range of the weight function of a WFA is unbounded over \(\rationals\), then \(w_0\) might not exist.
Furthermore, even if the range of the function \(W_{\cA}\) is bounded, for instance, to the interval \([0,1]\) as in probabilistic WFAs, the associated decision problem\footnote{The decision problem associated to the WMP, namely, the Threshold Reachability Problem asks, for a given WFA \(\cA\) and a threshold \(\nu \in \rationals\), whether there exists a word \(w \in \Sigma^*\) s.t. \(W_{\cA}(w) \geq \nu\).} has been proved to be undecidable~\cite{Paz1971,Blondel2003,Gimbert2010}.
Therefore, the WMP turns out to be non-computable in general.

For this reason, we look at the problem that results from bounding the domain of the weight function of WFAs, namely, assuming that the length of the words is bounded by a fixed value.
Consequently, we define the so-called \emph{Bounded Weight Maximization Problem} as follows.

\begin{definition}[Bounded Weight Maximization Problem]
  Given a WFA \(\cA\) and \(k \geq 1\), the \emph{Bounded Weight Maximization Problem} (BWMP) consists of computing a word \(w_0 \in \argmax_{w\in \Sigma^{\leq k}}W_{\cA}(w)\).
\end{definition}

Note that the maximum of the weight function \(W_{\cA}\) always exists over the bounded set of words \(\Sigma^{\leq k}\).
The decision problem associated to the BWMP is defined as follows.

\begin{definition}[Bounded Threshold Reachability Problem]
\label{def:BTRP}
Given a WFA \(\cA\), \(k \geq 1\) and \(\nu \in \rationals\), the \emph{Bounded Threshold Reachability Problem} (BTRP) asks whether there exists a word \(w \in \Sigma^{\leq k}\) s.t. \(W_{\cA}(w) \geq \nu\).

\end{definition}
Clearly, this problem is decidable since the search space is finite.
Moreover, we prove that it is NP-complete for WFAs with weights over $\integers$ by means of a reduction from the Hamiltonian Path Problem (see Section~\ref{app:np-completeness} in the Appendix).
\begin{theorem}
  The BTRP with weights over $\integers$ is NP-complete.
\end{theorem}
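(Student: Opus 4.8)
The plan is to prove the two halves of NP-completeness independently: membership in NP, which is routine, and NP-hardness via a polynomial-time many-one reduction from the (directed) Hamiltonian Path Problem. For the \emph{upper bound}, I would take as certificate the word $w\in\Sigma^{\leq k}$ itself. Reading $k$ in unary (the standard convention for such bounded problems, and also the regime of the instances produced below) makes $|w|\le k$ polynomial in the input size; one then evaluates $W_{\cA}(w)=\initial^{\top}\bigl(\prod_{i=1}^{|w|}M_{a_i}\bigr)\final$ in polynomial time, the partial matrix products having entries of bit-length $O(|w|\cdot s)$ with $s$ a bound on the bit-length of the entries of $\cA$, and compares the result with $\nu$. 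Hence BTRP is in NP.

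For \emph{hardness}, given a directed graph $G=(V,E)$ with $V=\{1,\dots,n\}$, I would first build a WFA over the alphabet $\Sigma=V$ and then compress the alphabet to two symbols. Set $k\ud n$ and $\nu\ud n$. The intent is that a word $w=v_1\cdots v_m\in\Sigma^{\leq k}$ represents the vertex sequence $v_1,\dots,v_m$, and that $W_{\cA}(w)=n$ should hold exactly when that sequence is a Hamiltonian path. The automaton is assembled from small deterministic gadgets, each regarded as a WFA with $0/1$ transition, initial and final weights (so its weight on $w$ is $1$ if the underlying DFA accepts $w$ and $0$ otherwise): for every vertex $j\in V$, a constant-size DFA $\cE_j$ accepting $w$ iff the symbol $j$ occurs \emph{exactly once} in $w$ (live states ``seen $0$ times'', ``seen once'', ``seen $\ge 2$ times'', plus a sink); and an $O(n)$-state DFA $\cD$ accepting $w=v_1\cdots v_m$ iff $(v_i,v_{i+1})\in E$ for all $i<m$ (it stores the current vertex and moves to a sink on the first non-edge). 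I then put $\cA\ud\cD\otimes(\cE_1\oplus\cdots\oplus\cE_n)$, which by the semantics of $\oplus$ and $\otimes$ from Section~\ref{app:operations} satisfies $W_{\cA}(w)=W_{\cD}(w)\cdot\sum_{j=1}^{n}W_{\cE_j}(w)$, has $O(n^2)$ states, and uses only weights in $\{0,1\}\subseteq\integers$.

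The correctness argument I would give is: each $W_{\cD}(w)\in\{0,1\}$ and each $W_{\cE_j}(w)\in\{0,1\}$, so $W_{\cA}(w)\le n$, with equality iff $W_{\cD}(w)=1$ (all consecutive pairs are edges) and every symbol of $\Sigma=V$ occurs exactly once in $w$; over an alphabet of exactly $n$ symbols the latter forces $|w|=n$ and forces $w$ to be a permutation of $V$, so $W_{\cA}(w)=n$ iff $w$ spells a Hamiltonian path of $G$, whereas in every other case (wrong length, a repeated or missing vertex, or a non-edge) a short count gives $W_{\cA}(w)\le n-1<\nu$. Conversely any Hamiltonian path $v_1,\dots,v_n$ of $G$ yields $w=v_1\cdots v_n\in\Sigma^{\leq k}$ with $W_{\cA}(w)=n=\nu$. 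Thus $G$ has a Hamiltonian path iff $(\cA,n,n)$ is a yes-instance, and $(\cA,n,n)$ is plainly computable from $G$ in polynomial time. To pass to the two-symbol case I would fix $\ell\ud\lceil\log_2 n\rceil$, encode each vertex by a distinct block in $\{a,b\}^{\ell}$, and rebuild $\cD$ (and, harmlessly, the $\cE_j$) to consume the input in blocks of $\ell$ letters, decode each block, and jump to a sink on the first ``malformed'' block (one that is incomplete at the end or decodes to a value $\ge n$); this costs only an $O(n)$ factor in states, with new length bound $k\ud n\ell$ and threshold still $n$.

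The step I expect to be the crux is ruling out ``cheating'' words---those that revisit a vertex or skip one---without blowing up the automaton. The naive move, intersecting the ``symbol $j$ occurs exactly once'' gadgets, produces a product of $n$ automata, hence $2^{\Theta(n)}$ states. The observation that makes the reduction work is that we only need to \emph{detect} the maximal value $n$, so we may instead \emph{sum} these gadgets: the weight then counts how many vertices are visited exactly once, which attains $n$ only for genuine permutations, and $\oplus$ keeps $\cA$ of polynomial size. The remaining effort is purely bookkeeping in the block-encoding gadget, so that no short or ill-formed word over $\{a,b\}$ can accidentally reach the threshold.
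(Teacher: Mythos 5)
Your proposal is correct, but it reaches the result by a genuinely different route than the paper. The paper proceeds in two stages: it first reduces the Hamiltonian \emph{Cycle} Problem to an intermediate \emph{Bounded Equality Reachability Problem} (does some word of bounded length have weight \emph{exactly} \(\nu\)?), using an automaton whose transition weights are distinct primes so that the target value --- the primorial \(p_{|Q|}\#\) --- is attained iff a word uses a transition into every state exactly once, by unique factorization; it then converts equality to threshold via the algebraic gadget \(\cB = (\cA \oplus 1 \ominus \alpha) \otimes (1 \oplus \alpha \ominus \cA)\), exploiting that \((t+1-\alpha)(1+\alpha-t)\) peaks at value \(1\) when \(t=\alpha\). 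You instead reduce directed Hamiltonian Path directly to the threshold problem, replacing the number-theoretic bookkeeping by the construction \(\cD \otimes (\cE_1 \oplus \cdots \oplus \cE_n)\), whose weight \emph{counts} how many ``vertex visited exactly once'' indicators fire along an edge-consistent word, and attains the threshold \(n\) only on Hamiltonian paths; the crucial observation that summing rather than intersecting the \(\cE_j\) keeps the automaton polynomial is exactly what makes this work. Each approach buys something: the paper's detour yields the NP-completeness of the equality variant with weights in \(\nat\) as a standalone result, at the price of needing the quadratic trick (which forces genuinely negative integer weights into the final instances) and primorial bit-length estimates; your direct argument is more elementary, avoids the equality-to-threshold conversion entirely, and produces hard instances whose weights all lie in \(\{0,1\}\), which is formally a slightly stronger hardness statement than weights over \(\integers\). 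Your treatment of NP membership (unary \(k\), polynomial bit-growth of partial matrix products) and of the two-symbol alphabet (block encoding of symbols, sink on malformed blocks) matches the paper's in substance.
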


Thus, we propose a metaheuristic based on the genetic algorithm to approximate a solution for the BWMP.

\section{Genetic Algorithm for the BWMP}
\label{section:algorithm}

The metaheuristic we propose follows the steps of the genetic algorithm~\cite{Goldberg89}, i.e., defines a set of genetic operators and routines that are executed following the baseline of this evolutionary technique in the way Figure~\ref{fig:flow-diagram} illustrates.

First, we show the genetic encoding of our problem.
\subsection{Genetic representation of the problem}
Given a WFA \(\cA = (Q, \Sigma, \{M_a\}_{a \in \Sigma}, \initial, \final)\) and \(k \geq 1\), we give a genetic representation of the solution domain for the BWMP as follows.
At each iteration or \emph{generation} of the algorithm, the set of candidate solutions, called the \emph{population}, evolves in order to find better candidates.
Each population is a set of \emph{individuals} which in our setting are words over the alphabet \(\Sigma\).
The sequence of symbols of each individual, usually known as \emph{chromosomes}, is of variable length which ranges in the interval \([1, k]\).
The \emph{fitness function} that evaluates on each individual \(w\) is defined as the weight function \(W_{\cA}\) applied on \(w\).
Thus, given two individuals \(u,v \in \Sigma^*\), we say that \(u\) is \emph{better than} \(v\) if{}f \(W_{\cA}(u) > W_{\cA}(v)\).

Next we give further details on the genetic operators and routines we use in our algorithm in Sections~\ref{subsection:initialization}~--~\ref{subsection:evaluation}.
All the parameter values defined in these routines will be fixed in the Section~\ref{section:experimental} of Experimental Results.

\begin{figure*}[tp]
  \begin{mdframed}
    \centering
    \includegraphics[width=\textwidth]{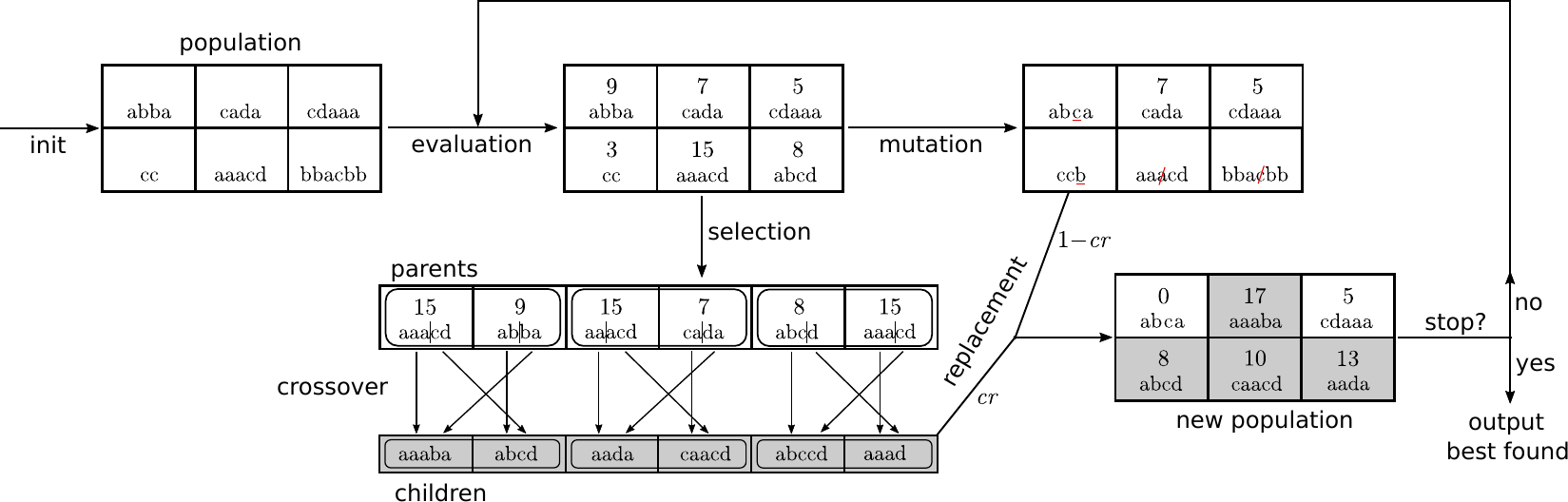}
    \caption{Flow diagram of our genetic algorithm for the WMP (toy example with $N = 6$ and $\chr = 2/3$).}
    \label{fig:flow-diagram}
  \end{mdframed}
\end{figure*}

\subsection{Initialization}
\label{subsection:initialization}
In this step, we create an initial population of fixed size, which we denote by \(N\), and keep constant throughout the execution.
We use a random initialization procedure that generates an initial population of individuals of length at most \(k\) uniformly at random.
We compared this approach to an alternative method that we further explain in Section~\ref{section:observations}.

\subsection{Selection}
\label{subsection:selection}
Selection is used at the \emph{crossover} and the \emph{replacement} step.
This method chooses \(N_s\) individuals from a population of size \(N\), following a \emph{fitness rank selection}.
Namely, we would like to have a sampling method that assigns more probability to words with higher weight.
For this, we represent the population as an ordered list from lower (smaller indices in the list) to higher (larger indices) fitness function.
We then define a distribution such that the probability assigned to the highest element of the list is \(\beta (> 1)\) times greater than
the probability of selecting the lowest.
In particular, if we also want a uniform gradient of probabilities, this sampling can be performed very efficiently
by sampling a uniform value $u \leftarrow [0,1]$, computing $i \coloneqq N\log_{\beta}{(1+u(\beta{-}1))}$ and
selecting the $i$-th individual of the sorted list.
This sampling method is repeatedly and independently performed until $N_s$ elements have been selected.

\subsection{Crossover}
\label{subsection:crossover}

In this step, we select \(N\) individuals according to the latter selection procedure.
Each of these individuals, called \emph{parents}, are combined by pairs to obtain two, possibly better, solutions.
This method works in a similar way as the  \emph{single-point crossover} operator in genetic algorithms.
Observe that, since the pair of parents might be of different length, the crossover point must be within the range of indices of both words.
Doing so, the pair of children produced will have the same lengths as their parents.
In order to allow for more diversity, we propose a slight variant of this method that, instead of one single index, chooses a pair of them, one for each individual.
Moreover, if any of the computed children has length greater than the bound \(k\), that individual is truncated ignoring the last symbols of the string.
This enforces the length constraint to hold throughout the execution.
Formally, given two parents \(v = v_1 \cdots v_{|v|},  w = w_1 \cdots w_{|w|}  \in \Sigma^{\leq k}\) selected to be recombined, our method chooses \emph{two} \emph{crossover points} \(i \in \{1, \ldots, |v|\}\) and \(j \in \{1, \ldots, |w|\}\) uniformly at random, and produces two \emph{children} \(x,y \in \Sigma^{\leq k}\) defined as \(x \ud v_1 \cdots v_i\cdot w_{j+1} \cdots w_{\ell_x}\) and \(y \ud w_1 \cdots w_j \cdot v_{i+1}\cdots v_{\ell_y}\) where:
\begin{align*}
  \ell_x \ud
  \begin{cases}
    |w|     & \!\!\!\text{if } i {+} |w| {-} j \leq k\\
    k {-} i & \!\!\!\text{otherwise}
  \end{cases} & \quad &
  \ell_y \ud
  \begin{cases}
    |v|     & \!\!\!\text{if } j {+} |v| {-} i \leq k\\
    k {-} j & \!\!\!\text{otherwise}
  \end{cases}
\enspace .
\end{align*}

\subsection{Replacement}
\label{subsection:replacement}
We follow an \emph{elitist} replacement.
We select a \emph{children rate}, \(\chr \in [0,1]\), which represents the proportion of individuals in the new population that will be children from the crossover step.
This means that we select from the population of children a total of \(C = \chr \cdot N\) individuals.
We perform this selection according to the same procedure as the one used in the selection step, except that we also avoid selecting twice the same index in order to extend the search space explored.
This way, we decrease the number of identical individuals in the new population.
We select the remaining \(N - C\) individuals for the new population from the original population (after the mutation step below).
Again, these are selected in the same way as the children, i.e., using the selection step and avoiding the choice of the same index twice.

\subsection{Mutation}
In parallel to the crossover step, we apply the mutation step.
We select the \emph{mutation probability}, \(\mutp\in [0,1]\), which represents the probability of an individual from the original population to go through the mutation procedure.
This procedure is called \emph{single-point} mutation and an efficient way to implement it is as follows.
We fix \(\lambda \in [0,1]\) which represents the probability of mutating each single symbol in a given individual \(w \in \Sigma^{\leq k}\).
Then we generate sequentially random numbers using an exponential distribution with parameter \(\lambda\) which will operate as the positions of \(w\) that will be mutated.
We stop this number generation when the last position computed is equal or greater than \(|w|\).
At each position generated we perform one out of 3 possible mutations chosen uniformly at random: deletion, insertion or replacement (by a different symbol in the alphabet) of a symbol.

Additionally, we include a way to introduce extra variability in the population by keeping track of the number of times the best individual in the population has repeated along different generations.
If this number exceeds a threshold we fix to \(10\) repetitions then we triple the probability \(\mutp\) during the next generation.

The population that results after the mutation procedure is the so-called \emph{mutant population}.

\subsection{Evaluation}
\label{subsection:evaluation}
This step calculates the fitness function of each individual.
We use it after the initialization step and during the selection procedure, as the latter relies on the fitness function evaluation of each individual.

\subsection{Termination condition}
Our \emph{termination condition} depends on the execution time (we set a timeout of $T (>0)$ seconds).
If the execution time reaches $T$ our algorithm halts.

\subsection{Fitness function: Memoization}
\label{section:f-memo}

We define the fitness function evaluated on each individual as its weight, which is computed as the matrix product described in Equation~\eqref{eq:weight_word}.
In consequence, matrix multiplication is the most time-consuming operation of the algorithm.\footnote{Consuming \(80-95\%\) of the total run-time if no optimization is performed.}
On the other hand, matrix associativity enables the \emph{memoization} of partial products allowing for time-efficiency optimizations.
We implement a simple \emph{memoization} technique by means of a lookup table as follows.

The lookup table is a hash table that is precomputed before the initialization step of the genetic algorithm and remains invariant throughout the execution.
The keys of the table are all the words of length at most \(B( > 1)\), where \(B\) is an adequate value that takes into account the size of the alphabet, and balances the time and space cost of initializing the table and the time gain in the computation of the weights along the execution.
We refer to this value as the \emph{(maximum) block size}.
The value associated to each key \(w = a_1\cdots a_n\) is the matrix \(\prod_{i=1}^{n} M_{a_i} \in \rationals^{|Q| \times |Q|}\), where \(Q\) and \(M_{a_i}\) are the states and transition matrices of the input WFA, respectively.

This way, to compute the weight of \(w\), we decompose the word into \(\floor{\frac{n}{B}} + 1\) subwords where the first \(\floor{\frac{n}{B}}\) of them are of size \(B\) and the last one is of size \(n \mod B\).
Then, the weight of each block is retrieved from the table to compute the weight of \(w\) as in Equation~\eqref{eq:weight_word}.
To minimize the number of operations (products of rationals) in the latter equation, we first compute the vector-matrix product of the initial vector and the matrix corresponding to the first block of the word, and then the result is multiplied by the matrix for the second block, and so on.

In Section~\ref{subsection:memoization}, we show how this simple memoization technique on WFAs improves the time efficiency of the algorithm.

\subsection{Implementation}
We implemented our algorithm in C.
The procedure takes as input the matrix representation of a WFA \(\cA\) and a value \(k \geq 1\).
For the code representation of the weights we use rationals of arbitrary precision by means of the library for arbitrary-precision arithmetic on rational numbers, GMP~\cite{GMP}.
For the lookup table implementation we use  \texttt{uthash.h}~\cite{uthash} which is a header file written
in C that implements a hash table for handling C structures.

Our source code is publicly available and open source for reproducibility and verifiability.\footnote{At GitHub: \href{https://github.com/elenagutiv/ga-wfas}{https://github.com/elenagutiv/ga-wfas}}

\section{Experimental Results}
\label{section:experimental}
We conducted experiments to evaluate the performance of our algorithm.
In particular, we tackle the two following questions:

\begin{enumerate}[leftmargin = 5ex]
  \item Is our genetic-algorithm-based metaheuristic a good fit for approximating the BWMP?
  \item Given that memoization of partial executions is an appropriate optimization for WFAs, what is an lower bound on the gain of using this technique?
\end{enumerate}

Regarding question \((1)\), we compare our algorithm with random search to evaluate the adequacy of our problem to a genetic-based technique (Section~\ref{subsection:comparison}).
Additionally, we evaluate the quality of the solutions found by our algorithm by designing an experiment where the word with the highest weight is known (Section~\ref{subsection:bf-search}).

For question \((2)\), we compare the performance of our algorithm with a version of it where no memoization is performed in order to establish an lower bound on the gain of using this technique (Section~~\ref{subsection:memoization}).

\paragraph{Experimental Setting}
To carry out the experiments we built a set of benchmarks composed of 12 WFAs which we call \emph{Random}.
Each of these WFAs was obtained as the result of executing the WFA extraction procedure developed by Okudono et al.~\cite{tmp-Takamasa2019}.
In each case, the input of the extraction procedure was an RNN trained with a set of input-output pairs \((w, W_{\cA}(w))\)
 where \(w \in \Sigma^{\leq 20}\) and \(\cA = (Q, \Sigma, \{M_a\}_{a\in \Sigma}, \initial, \final)\) was a WFA randomly generated with \(|Q| \in \{10, \ldots, 20\}\), \(|\Sigma| \in \{4, 6, 10\}\) and \(W_{\cA}: \Sigma^{\leq 20} \rightarrow [0,1]\cap \rat\).
The size of the state spaces of each extracted WFA \(\cA' = (Q', \Sigma, \{M'_a\}_{a\in \Sigma}, \initial', \final')\) in \emph{Random} is between \(6\) and \(25\) states and, obviously, the size of their alphabets remains in \(\{4,6,10\}\).
We will name each of the 12 extracted WFAs \(\cA'\) as \(X(|\Sigma|, |Q'|)\), where \(X \in \{A, B, \ldots, L\}\).

Table~\ref{tab:parameter} shows the values of the parameters we fix for the experiments.
The termination condition only depends on the timeout \(T\) which is set to \(120\) seconds for all the experiments performed (except from the experiment in Section~\ref{subsection:bf-search}).
This value is chosen to obtain significant results in an admissible amount of time.
For the rest of the parameters we empirically selected the most appropriate values for a good performance.
In particular, in order to choose an appropriate value for the maximum block size \(B\) of the hash table for each benchmark, we consider the size of the alphabet and the number of states of each WFA in order to keep constant the time of hash initialization.
Thus, \(B\) is \(7, 6\) and \(5\) for alphabet sizes of \(4, 6\) and \(10\) respectively when the number of states of the corresponding WFA is less or equal to \(12\) , while we decrease the value \(B\) in 1 unit if the number of states is greater than \(12\).

We run our experiments on a Debian/GNU Linux 9.0 machine of 64 bits with 72 virtual cores (Xeon Gold 6154 @3GHz) and a RAM of 64 GB (DDR4 @2666 MHz).
\begin{table}[!htbp]
  \caption{Parameter Values}
  \label{tab:parameter}
  \begin{tabular}{cccccccl}
    \toprule
    \(k\) & \(N\) & \(\beta\) & \emph{cr} & \emph{mp} & \(\lambda\) & \(T\)\\
    \midrule
    20 & 200 & 30 & 0.8 & 0.1 & 0.1 & 120 (s)\\
    \bottomrule
  \end{tabular}
\end{table}

\subsection{Comparison with random search}
\label{subsection:comparison}
The goal of this experiment is to determine how well a genetic-algorithm-based solution fits the BWMP.
To do so we compare the performance of our metaheuristic against a \emph{random search} algorithm, an unspecialized search method that does not assume any inner structure on the problem.

Random search generates at each iteration a  word \(w \in \Sigma^{\leq 20}\) uniformly at random\footnote{We first choose a length value in the interval \([1,20]\) and then, at each position of the word, we choose a symbol in the alphabet. Both selections are performed uniformly at random.} and computes its weight w.r.t. input WFA.
Initially, it stores the first weight observed and, at each iteration, updates this value if a higher weight is found.
Finally, it returns the last weight stored.

In Table~\ref{tab:comparison} we say that a weight is \emph{observed} if{}f a word with that weight is analyzed by the algorithm (we only consider each word once, even if they are analyzed twice or more times).
This way, the columns \emph{Weights observed} show the average of all the weights observed by each algorithm after \(10\) executions, with a \(95\%\) confidence interval (in gray).
In bold and between parentheses, we show the maximum weight found by each procedure.
Finally, columns w/s show the \emph{total number words analyzed per second} by each algorithm.

We also attach 4 histograms in Figure~\ref{fig:histograms}, each corresponding to the \emph{observed-weight distributions} of one single execution of the random search and our metaheuristic for the same input WFA, in order to illustrate and complement the information given in Table~\ref{tab:comparison}.
We add to each histogram a colored vertical line indicating the maximum weight found by each algorithm.

First, notice that the random search weight distribution estimates the actual weight distribution given by the input WFA (this estimation improves as we increase the execution time).
Thus, the average shown in the column \emph{Weights observed} of random search (Table~\ref{tab:comparison}) is an estimation of the average weight of the words read by the input WFA.
The average of the weights observed by our algorithm is always greater than that of random search.
So is the maximum found, as well.
This means, that our algorithm is able to reach infrequent weight values as long as they are better than those already seen.

All these observations are illustrated in Figure~\ref{fig:histograms}.
Notice that, while the observed-weight distribution of random search estimates the actual weight distribution of the input WFA, our genetic algorithm tends to reach more frequently better solutions.

On the other hand, random search analyzes in average \(10.4\) times more words per second than our algorithm (Table~\ref{tab:comparison})
This is expected as the genetic-algorithm machinery is computationally heavier.

We conclude that our genetic-based metaheuristic is able to exploit the internal structure of WFAs to outperform a black-box method such as random search.
These results enhance the use of this method as an alternative to the lack of specialized algorithms in scenarios such as WFAs that result from automata learning techniques~\cite{Balle2015}.

\begin{table}[t]
  \caption{Observed-weight distributions and number of the words analyzed per second (w/s) by random search and our algorithm.
  The columns \emph{Weights observed} show the average of all the weights observed by each algorithm, with a \(95\%\) confidence interval (in gray). In bold and between parentheses, we show the maximum weight found by each procedure.
  The columns \emph{w/s} show the total number words analyzed per second by each algorithm.}
  \label{tab:comparison}
  \begin{tabular}{lll|ll}
    \toprule
    & \multicolumn{2}{c}{Random search}      & \multicolumn{2}{c}{Genetic  algorithm}\\
    \midrule
    & Weights observed & w/s & Weights observed & w/s\\
    \midrule \midrule
    A(4,7) & 0.77  \conf{0.19}  \maxi{1.02} & 14.21k & 0.96  \conf{0.13}  \maxi{1.07} & 1.5k\\
    B(4,11) & 0.78  \conf{0.17}  \maxi{0.98} & 6.22k & 0.93  \conf{0.12}  \maxi{1.03} & 0.76k\\
    C(4,12) & 0.5  \conf{0.15}  \maxi{0.71} & 5.77k & 0.59  \conf{0.2} \maxi{0.73} & 0.68k\\
    D(4,15) & 0.4  \conf{0.24}  \maxi{0.73} & 3.55k & 0.67  \conf{0.23}  \maxi{0.84} & 0.14k\\
    E(6,9) & 0.67  \conf{0.2}  \maxi{1.04} & 8.65k & 0.92  \conf{0.22}  \maxi{1.17} & 0.95k\\
    F(6,12) & 0.46  \conf{0.15}  \maxi{0.74} & 5.24k & 0.63  \conf{0.22}  \maxi{0.86} & 0.62k\\
    G(6,13) & 0.67  \conf{0.18}  \maxi{0.96} & 4.14k & 0.87  \conf{0.2}  \maxi{1.04} & 0.49k\\
    H(6,25) & 0.43  \conf{0.23}  \maxi{0.79} & 1.21k & 0.65  \conf{0.31}  \maxi{0.86} & 0.14k\\
    I(10,6) & 0.47  \conf{0.22}  \maxi{0.78} & 14.3k & 0.76  \conf{0.17}  \maxi{0.92} & 1.42k\\
    J(10,6) & 0.5  \conf{0.23}  \maxi{0.82} & 13.38k & 0.73  \conf{0.28}  \maxi{0.91} & 1.44k\\
    K(10,6) & 0.43  \conf{0.16}  \maxi{0.66} & 14.26k & 0.61  \conf{0.19}  \maxi{0.77} & 1.5k\\
    L(10,6) & 0.38  \conf{0.22}  \maxi{0.66} & 14.3k & 0.55  \conf{0.3} \maxi{0.76} & 1.51k\\
    \bottomrule
  \end{tabular}
\end{table}

\begin{figure}[!htbp]
  \caption{Observed-weight distributions of random search (\emph{Random}) and our algorithm (\emph{Genetic}).
  The maximum weight found by each algorithm is indicated by a colored vertical line.}
  \label{fig:histograms}
  \begin{minipage}{.5\textwidth}
    \begin{minipage}{.5\textwidth}
      \centering
      \includegraphics[width=\linewidth]{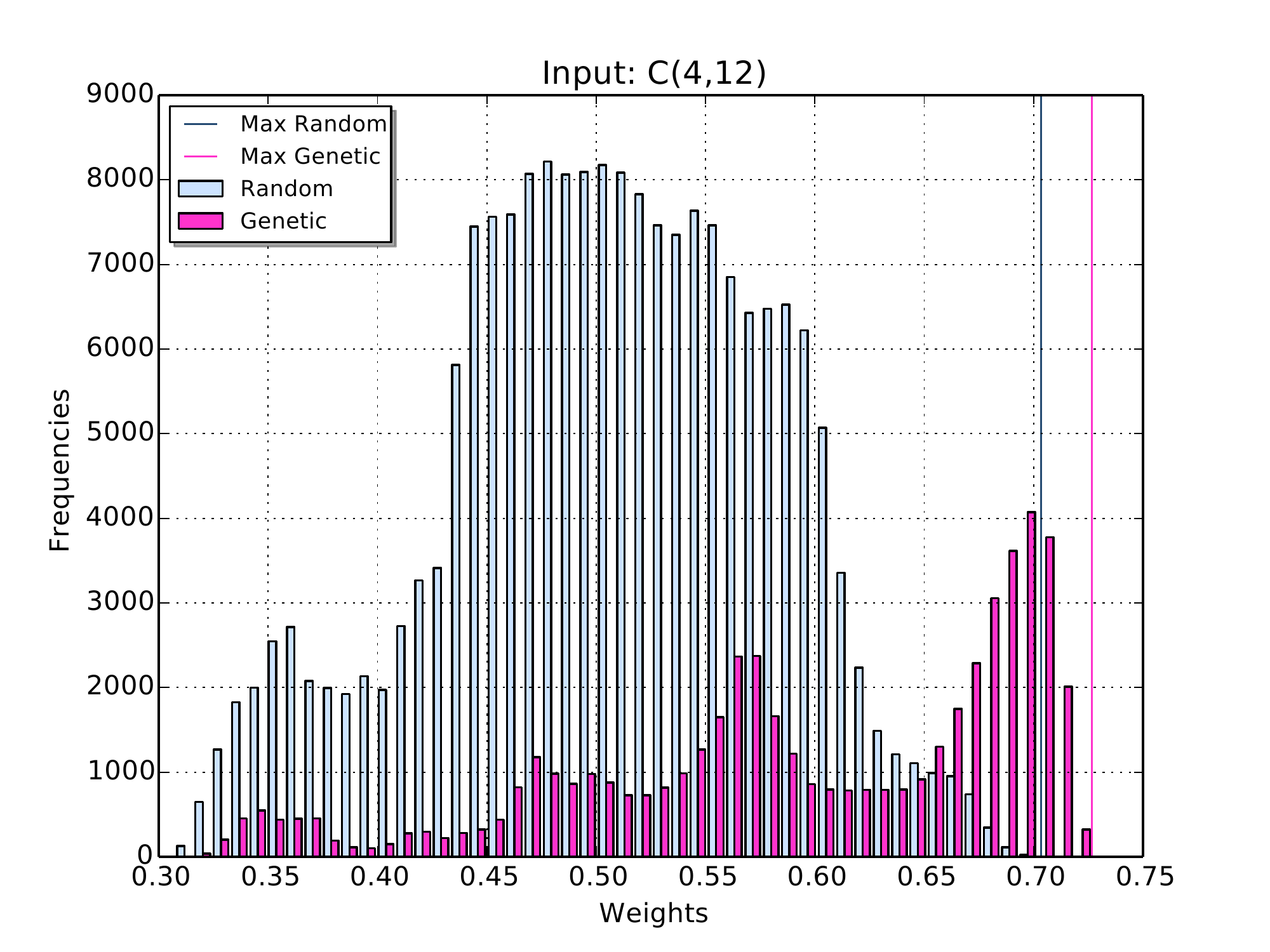}
    \end{minipage}
    \begin{minipage}{.5\textwidth}
      \centering
      \includegraphics[width=\linewidth]{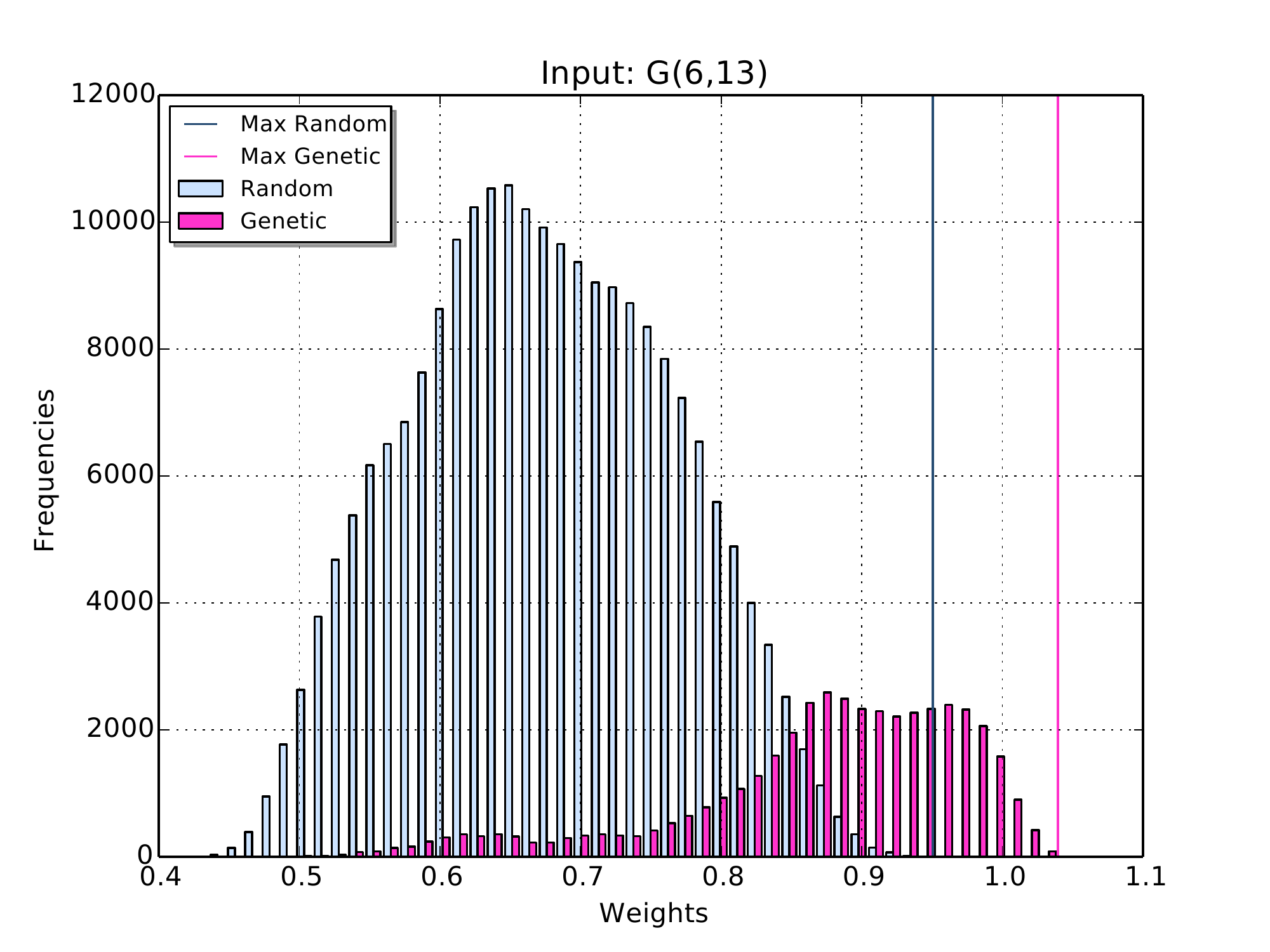}
    \end{minipage}
  \end{minipage}
  \begin{minipage}{.5\textwidth}
    \begin{minipage}{.5\textwidth}
      \centering
      \includegraphics[width=\linewidth]{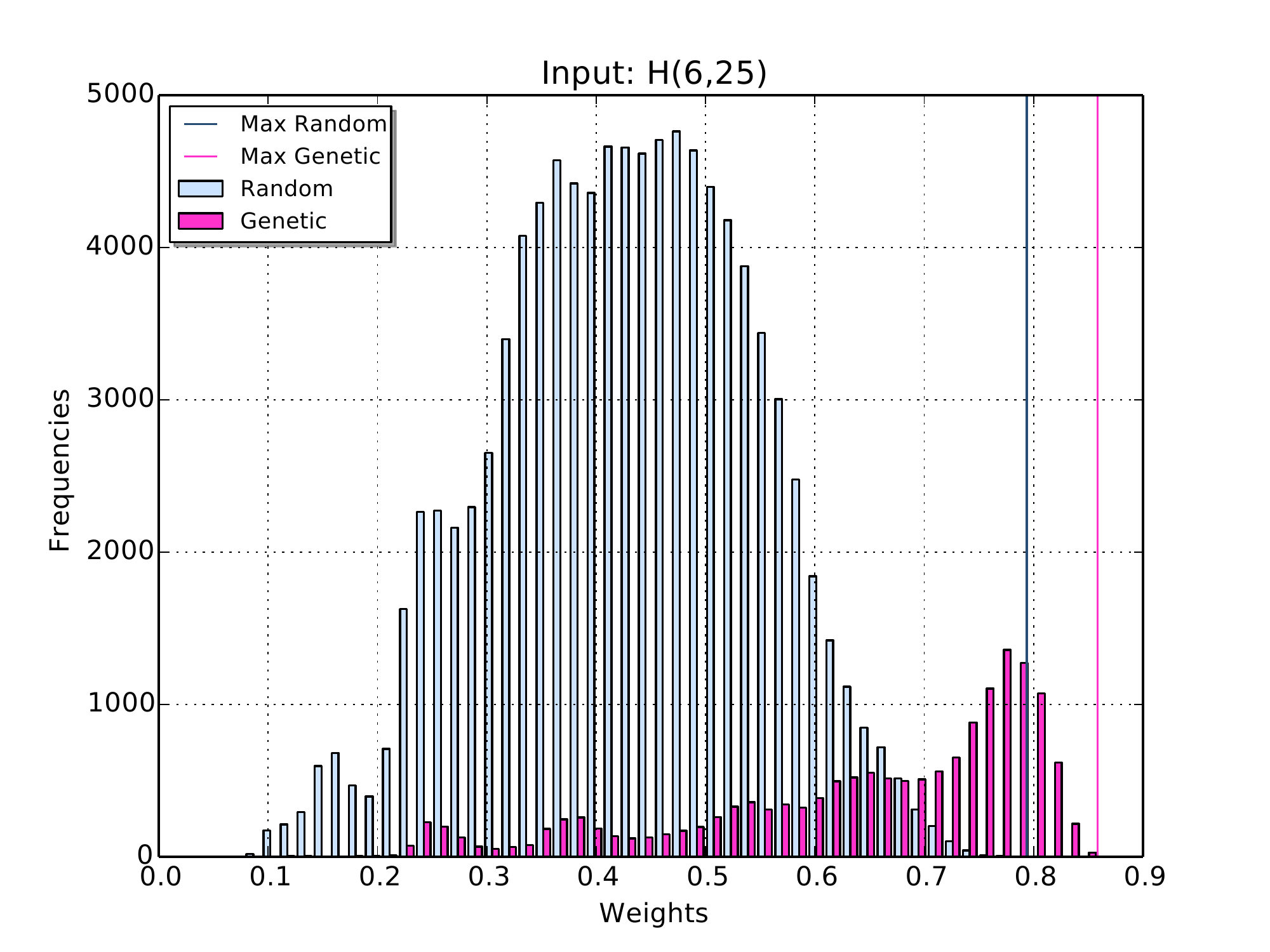}
    \end{minipage}
    \begin{minipage}{.5\textwidth}
      \centering
      \includegraphics[width=\linewidth]{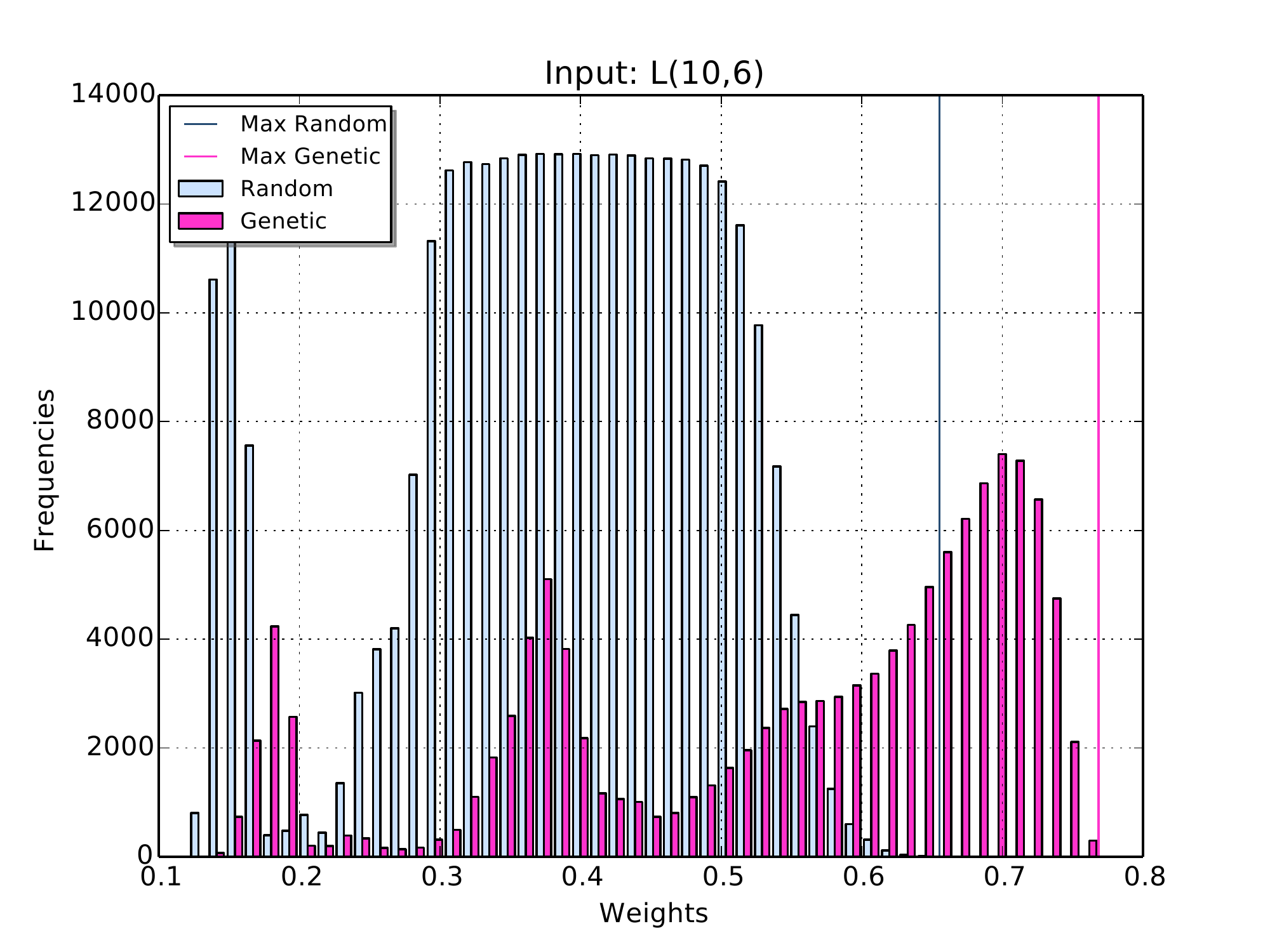}
    \end{minipage}
  \end{minipage}
\end{figure}

\subsection{Examples with known maximum weight}
\label{subsection:bf-search}
In this section, we evaluate the quality of the solutions found by our algorithm when the word with the highest weight in the automaton is known.
To do so, fixed an appropriate bound \(k\), we will run our genetic algorithm and compare the maximum weight found with the actual maximum weight over all the words of length at most \(k\).

Specifically, we will use a subset of \(9\) benchmarks of Random, and we will perform exhaustive search to solve the BWMP exactly.
The length bound \(k\) of the problem is fixed to a value that makes feasible this search in a reasonable amount of time.
In particular, the choice of \(k\) for each benchmark depends on the size of the alphabet of the WFA, and thus, for an alphabet size of \(4,6\) and \(10\), the value of \(k\) we choose is \(14, 11\) and \(9\), respectively.
Finally, the subset of \(9\) benchmarks corresponds to those WFAs in Random with at most \(13\) states.
This way, we also avoid the cases in which exhaustive search would need an excessive amount of time to finish.

On the other hand, we run our genetic algorithm on the same subset of \(9\) benchmarks, executing the procedure \(10\) times on each WFA, with a fixed timeout\footnote{We update the timeout value from 120 to 240 seconds in order to give sufficient time to our algorithm to obtain a fair comparison w.r.t. results obtained by the exhaustive search.} of \(240\) seconds.

Table~\ref{tab:comparison-bf} compares the maximum weights found by exhaustive search and our algorithm.
We also include in the table the information about the position of the average maximum weight found by our algorithm w.r.t. the list of \(n\) best weights output by exhaustive search.
In particular, if the position is \(1\), then the maximum weight found by our genetic algorithm always coincides with the highest weight of length at most \(k\) in the given WFA.

Finally, we notice that in the cases where our algorithm performs worse, and specially in the case of \(L(10,6)\), the WFA contains local maxima that corresponds to words significantly different to the word of highest weight.
These cases are particularly difficult for the genetic search, and strategies like replacing part of the population by new random individuals might solve premature convergence.

\begin{table}[t]
  \caption{Maximum weights found by exhaustive search and our algorithm. Note that the column \emph{Max. found} of \emph{Exhaustive search} correspond to the actual highest weight in the WFA of length less or equal to \(k\), where \(k\) is \(14,11\) and \(9\) when the alphabet size is of \(4,6\) and \(10\) respectively. The column \emph{Max. found} of \emph{Genetic algorithm} is the average of the maximum weights found in the \(10\) executions. The column \emph{Position w.r.t. the maximum} indicates the position of the maximum weight found by our GA in the list of \(n\) highest weights output by exhaustive search.
  We indicate with a superscript \(\dagger\) those cases where our algorithm found the maximum weight at least once.}
  \label{tab:comparison-bf}
  \begin{tabular}{ll|ll}
    \toprule
    & \multicolumn{1}{c}{Exhaustive search}      & \multicolumn{2}{c}{Genetic  algorithm}\\
    \midrule
    & Max. found & Max. found & Position w.r.t. \\
    & & & the maximum\\
    \midrule \midrule
    A(4,7) & 1.0297 & 1.0286 & \(2^{\dagger}\)\\
    B(4,11) & 0.9895 & 0.9895 & \(1^{\dagger}\)\\
    C(4,12) & 0.7099 & 0.7037 & 5\\
    E(6,9) & 0.9898 & 0.9850 & 3\\
    F(6,12) & 0.7281 & 0.7128 & 4\\
    I(10,6) & 0.7631 & 0.7631 & \(1^{\dagger}\)\\
    J(10,6) & 0.7723 & 0.7711 & \(2^{\dagger}\)\\
    K(10,6) & 0.6150 & 0.6120 & \(7^{\dagger}\)\\
    L(10,6) & 0.6442 & 0.5982 & 16\\
    \bottomrule
  \end{tabular}
\end{table}

\subsection{Memoization}
\label{subsection:memoization}

The goal of this experiment is to establish an lower bound on the gain of using a simple memoization technique (see Section~\ref{section:f-memo}) in our algorithm.
Specifically, we perform a comparison of the ratio of \emph{words analyzed per second} on two versions of our algorithm: one that does not use a look-up table when computing the fitness function of each individual, and another that uses it.
We call these versions the \emph{no-memoization} and the \emph{memoization} version, respectively.

In Table~\ref{tab:memoization}, each row collects the \emph{average} of the values measured for each benchmark from the set \emph{Random} obtained from its \(10\) executions.
Each column "words analyzed/s" corresponds to the \emph{total number of words analyzed per second} by each version.

We conclude that, in average, this simple memoization technique achieves a number of words analyzed \(3.8\) times greater than that of the non-optimized version.
Indeed, this technique can be further optimized.
The number of words analyzed by the algorithm is too large to perform an efficient on-the-fly memoization, where every word that has not been observed before is stored in the hash table.
However, it would be interesting to explore efficient ways to store the most frequent prefixes or suffixes, for instance.

\begin{table}[t]
  \caption{Comparison of the no-memoization vs. the memoization version of our algorithm in terms of words analyzed per second (w/s).}
 \label{tab:memoization}
 \small
  \begin{tabular}{lc|c}
    \toprule
      & {No-Memo}  & {Memoization}\\
      \midrule
      &  w/s & w/s\\
    \midrule \midrule
    A(4,7) & 0.4k &  1.5k\\
    B(4,11) & 0.18k &  0.76k\\
    C(4,12) & 0.15k &  0.68k\\
    D(4,15) & 0.04k &  0.17k\\
    E(6,9) & 0.25k &  0.95k\\
    F(6,12) & 0.14k &  0.61k\\
    G(6,13) & 0.12k &  0.49k\\
    H(6,25) & 0.03k &  0.14k\\
    I(10,6) & 0.48k &  1.44k\\
    J(10,6) & 0.48k &  1.44k\\
    K(10,6) & 0.47k &  1.44k\\
    L(10,6) & 0.48k &  1.51k\\
    \bottomrule\\[-0.5cm]
  \end{tabular}
\end{table}

\subsection{Other observations}
\label{section:observations}

In order to improve our algorithm we implemented several variants.
For the sake of space, we describe two of the most remarkable ones:
\begin{itemize}[leftmargin = 4ex] \setlength{\itemsep}{1ex}
  \item We implemented an alternative initialization method to random initialization that computes the \(N\)-best individuals from the set of words \(\Sigma^{\leq B}\), where \(B \geq 1\) is the maximum block size in the look-up table.
  The idea was to start with a population of short but higher-weighted words in the WFA and \emph{extend} them, i.e., combine and mutate them, with the hope that those good short patterns found in the WFA could be repeated and combined to generate higher weighted words.
  We exploited the look-up table to obtain the weights of the words in \(\Sigma^{\leq B}\) efficiently.
  By analyzing the observed-weight distribution of both techniques we concluded that no significant gain was achieved w.r.t. the simpler random initialization, which evidences the ability of our algorithm to escape from local optima and explore wider regions of the search space.
  \item We introduced a variant in the crossover procedure: instead of choosing just one single pair of indices, we computed a set \(\cI\) of them as well as the weights of each resulting pair of children.
  Then, we selected the \emph{best} one according to either the average weight or the maximum weight of the two children generated.
  We performed experiments comparing the performance when the size of \(\cI\) ranged from 1 to 10 pairs of indices at each crossover operation. 
  We could not observe any relevant improvement in the solutions obtained  when \(|\cI| > 1\) that justified the extra computational cost of this technique.

\end{itemize}
\section{Case of Study}
\label{section:applications}
In this section we perform a case study to illustrate how our algorithm can be used for the
light-weight verification of an RNN against a weighted regular specification.
We will use our genetic algorithm in combination with the procedure that extracts a WFA
from a given RNN~\cite{tmp-Takamasa2019}, to estimate the error between the extracted WFA and the WFA that describes the specification of the RNN over a bounded-length set of words.
In turn, we will obtain an estimation of the error together with an evidence of why
the network is not properly approximating its specification.

First, we define a notion of \emph{distance} between two WFAs over a length-bounded set of words.
Recall that given two WFAs \(\cA\) and \(\cB\), it is possible to construct the WFA denoted by \(\cA\ominus \cB\) such that \((\cA\ominus \cB)(w) = \cA(w) - \cB(w)\), for all \(w \in \Sigma^*\).

\begin{definition}
\label{def:distance}
Given two WFAs \(\cA\) and \(\cB\) and a natural \(k \geq 1\), we define the
\emph{distance between \(\cA\) and \(\cB\) over \(\Sigma^{\leq k}\)} as

\[
  d^k(\cA, \cB) \ud
  \max_{w\in\Sigma^{\leq k}} \{ (\cA \ominus \cB)(w), (\cB \ominus \cA)(w)\}
  \enspace .
\]
\end{definition}

Note that this corresponds to defining the distance between the two automata as the difference
(in absolute value) of the weights in which they differ the most.

Intuitively, given a WFA \(\cA\) that abstracts the behavior of a system, and a WFA \(\cB\) that defines its specification, this notion of distance describes the maximum ``error" of \(\cA\) approximating \(\cB\) on a length-bounded set of words.
Thus, using our algorithm to give an approximation of the distance will provide an estimation of this error.
Furthermore, the words on which the maximum is reached
are witnesses of the behavioral difference between the system and the specification.

\newcommand{\s}[1]{\text{\sf {\small #1}}}
For the sake of our case study, we consider a problem that (as we argue below) admits as simple specification
as a WFAs. Namely, the problem of deciding whether a numerical expression contains well-balanced parenthesis.
For this, we trained an RNN with a set of 9000 input-output pairs \((w, f(w))\) as in~\cite{tmp-Takamasa2019}
where the 12 symbols alphabet is defined as \(\Sigma \ud \{\s{0}, \ldots, \s{9}, \s{(}, \s{)}\}\),
\(w \in \Sigma^{\leq 20}\) and the function \(f: \Sigma^{\leq 20} \rightarrow [0,1]\) is defined as:
\[f(w) \ud
\begin{cases}
1 - 2^{-\Delta(w)} & \text{if \(w\) is well-parenthesized and }\Delta(w) \leq 2\\
0 & \text{otherwise}
\end{cases} \enspace ,\]
where \(\Delta(w)\) represents the depth of the deepest balanced pair of parentheses in \(w\)
For instance,
\begin{align*}
  \s{``(1)(2)"}   \fmapsto 1/2 &&
  \s{``((1))(2)"} \fmapsto 3/4 &&
  \s{``((1(2)"}   \fmapsto 0 &&
  \s{``(((1)))"}  \fmapsto 0
\end{align*}
Observe that the function $f$ assigns weight $0$ to unbalanced words or balanced words with depth greater than $2$ and a positive weight otherwise.
Note that the weighted language described by this function is regular, thus it can be modeled by means of a WFA \(\cA_{E}\) (a construction with \(8 \) states is deferred to Section~\ref{app:construction} in the Appendix.
The extracted WFA \(\cA_R\) has \(6\) states and thus, the WFAs \(\cA_R \ominus \cA_E\) and \(\cA_E \ominus \cA_R\) have \(14\) states each.
We run our genetic algorithm on both WFAs (with the parameter values described in Table~\ref{tab:parameter})
and a timeout of $T = 240s$ instead, to increase the accuracy of our results.
In Figure~\ref{fig:distance} we represent the observed-weights distributions
of \(\cA_E \ominus \cA_R\) and \(\cA_R \ominus \cA_E\) in one of the execution of our algorithm.

We obtain that the maximum error between \(\cA_R\) and the specification \(\cA_E\) is approximately \( 0.0141\).
On the other hand, the 5 words with the highest error and its corresponding error are:
\[
  \begin{array}{ccccc}
    \s{``()777999()"} & \s{``()779999()"} & \s{``()797974()"} & \s{``()749909()"} & \s{``()999999()"} \\
        0.01410   &    0.01405    &     0.01402   &     0.01399   &     0.01397
  \end{array}
\]

We notice that the maximum weight found for input \(\cA_R \ominus \cA_E\) is almost \(0\), which means that
the automaton trained on the RNN (and thus the RNN itself\,\footnote{The procedure that extracts a WFA
  from a RNN can provide an accuracy, computed as the mean squared error, that bounds the maximum error
  between the WFA and the RNN~\cite{tmp-Takamasa2019}.}) tends to under approximate the specification values.
The histogram also provides information on the number of distinct words that are witness
of the behavioral difference between the automata.

Our analysis illustrates the differences between the trained neural network and the specification.
For instance, if the RNN designer had established an error tolerance lower than $0.013$, they would need
to revisit the RNN training process. Furthermore, the misclassified inputs with highest error, e.g.,
\s{``()777999()''}, \s{``()779999()''} or \s{``()797974()''} can be used as a hint on
the process of revision.

\begin{figure}[t]
  \captionof{figure}{Observed-weight distribution of \(\cA_E \ominus \cA_R\) and \mbox{\(\cA_R \ominus \cA_E\).}}
  \centering
  \includegraphics[width=\linewidth]{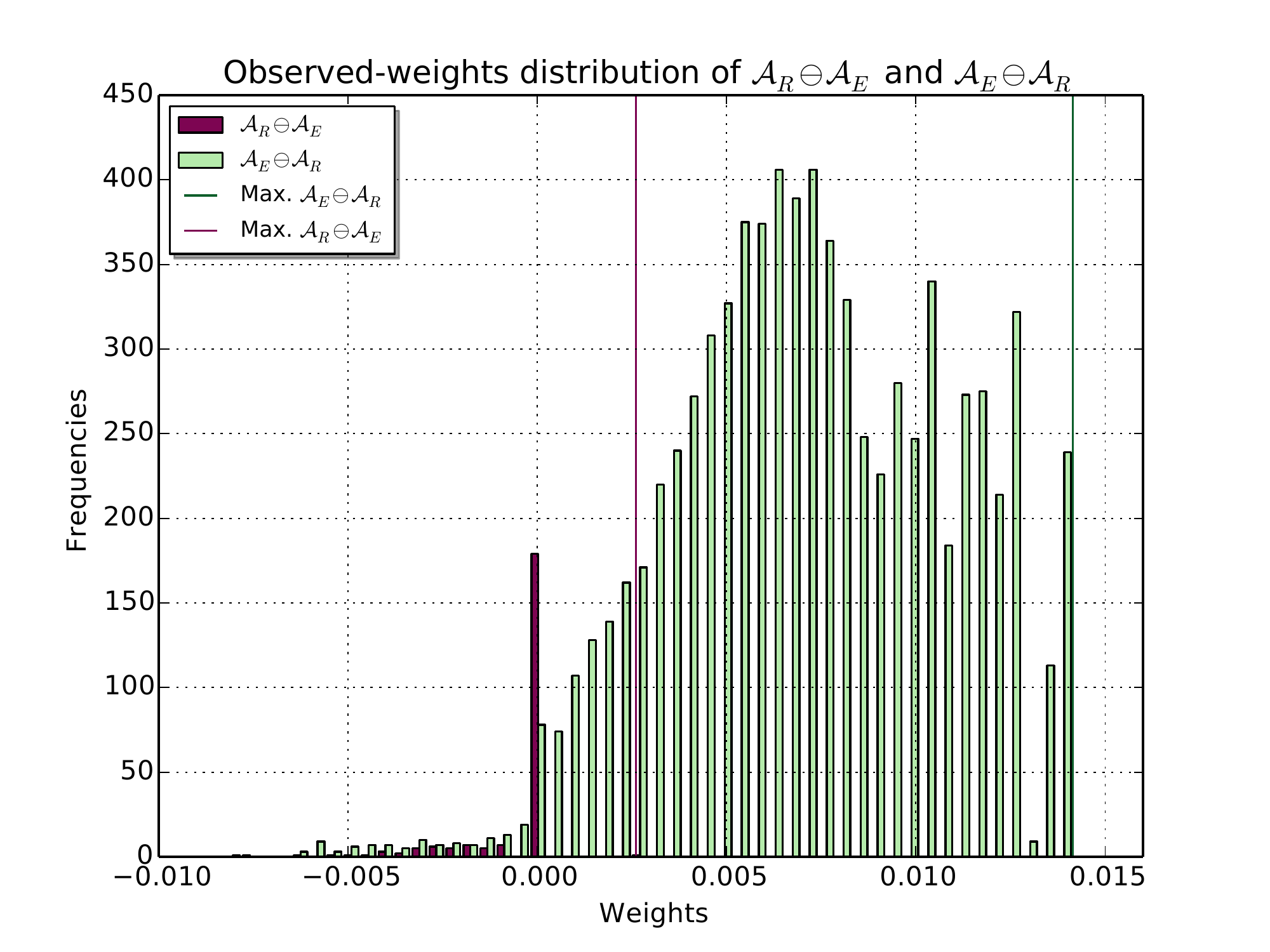}
  \label{fig:distance}
\end{figure}

\section{Conclusions and Future Work}
We study the problem of finding the word with the highest weight in WFAs with weights over \(\rationals\).
We propose a metaheuristic based on the genetic algorithm to approximate solutions to the BWMP, showing that this method benefits from optimization techniques such as memoization of partial executions, and outperforms black-box methods such as random search, which enhances the use of the algorithm in cases where specialized algorithms for the BWMP are not available, in particular, WFAs resulting from automata learning techniques~\cite{Balle2015}.
Regarding the latter scenario, we use our method in combination with a recent procedure that extracts a WFA from an RNN~\cite{tmp-Takamasa2019} to show its potential for detecting misclassified input words in the RNN as well as estimate the maximum error between the network and its target function.

One line of future work is to explore other crossover operators, being \emph{k-point crossover} of particular interest as it may further exploit the structure of cycles that is proper of general WFAs.
We conjecture that this technique may be particularly effective on WFAs with a large number of states.
Additionally, it would be interesting to explore how effective our algorithm is on real applications.
emerging from other contexts.
For instance, discrete-linear systems can be modeled as WFAs under the assumption that their input function takes a finite number of possible values.
Our techniques could be used to compare two different systems (controller and plant) with expected similar behavior.

Finally, we extend previous results by proving that the bounded decisional version is NP-complete
even in the case of integer weights and two alphabet symbols.

\section{Appendix}
\subsection{The BTRP is NP-complete}
\label{app:np-completeness}

In order to show that the \emph{Bounded Threshold Reachability Problem} (Definition~\ref{def:BTRP}) is NP-complete,
we first prove that so is the \emph{Bounded Equality Reachability Problem}.

\begin{definition}[Bounded Equality Reachability Problem]
  Given a WFA \(\cA\) over the rational numbers, \(k \in \nat\) and \(\nu \in \rationals\),
  the \emph{Bounded Equality Reachability Problem} (BERP) consists of deciding whether there exists
  a word \(w \in \Sigma^{k}\) such that \(W_{\cA}(w) = \nu\).
\end{definition}

Our proof of this result relies on a reduction from the well-known NP-complete \emph{Hamiltonian cycle} problem, which
we define here for completeness.
A \emph{simple directed graph} is a tuple \(G = (V, E)\) where \(V\) is a set of elements called
\emph{vertices} and $E$ is a subset of ordered pairs, \(E \subseteq V \times V\), called \emph{directed edges}.
Given an edge $e \coloneqq (u,v)$, we denote its \emph{source} $u$ by \(\source(e) = u\)
and its \emph{destination} $v$ by $\dest(e)$.
A \emph{cycle} in \(G\) is a finite sequence of edges \(e_1 \cdots e_n\) satisfying
\(\dest(e_i) = \source(e_{i+1})\) for all \(i \in [n{-}1]\) and $d(e_n) = s(e_1)$.
A cycle is said to be \emph{Hamiltonian} if it visites every vertex exactly once, that is,
$n = |V|$ and for every $v \in V$, $\exists j \in [n]$ such that $v = \dest(e_j)$.

\begin{definition}[The Hamiltonian Cycle Problem]
  Given a simple directed graph \(G = (V,E)\), the \emph{Hamiltonian Cycle Problem} (HCP) consists of
  deciding whether there exist a Hamiltonian cycle in \(G\).
\end{definition}

We use the fact that the HCP is NP-complete (Karp~\cite{Karp72}) to prove the following theorem, which
holds even if the alphabet is restricted to contain only two symbols.

\begin{theorem}
\label{thm:BERP}
The BERP with weights in $\nat$ is NP-complete.
\end{theorem}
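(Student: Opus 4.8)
The plan is to prove Theorem~\ref{thm:BERP} by a two-part argument: membership in NP and NP-hardness via a reduction from the Hamiltonian Cycle Problem. Membership is routine: given a certificate word \(w \in \Sigma^k\), computing \(W_{\cA}(w)\) by the matrix product of Equation~\eqref{eq:weight_word} takes polynomial time (the entries stay polynomially bounded in bit-size because \(k\) is given in unary as part of \(\Sigma^k\), and we are over \(\nat\)), so we can check \(W_{\cA}(w) = \nu\) in polynomial time. The bulk of the work is the reduction.

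For NP-hardness, I would, given a simple directed graph \(G = (V, E)\) with \(n = |V|\), construct a WFA \(\cA_G\) over a two-letter alphabet \(\Sigma = \{a, b\}\) whose states are (essentially) the vertices \(V\), together with the target length \(k\) and threshold \(\nu\), so that an accepting word of length \(k\) with weight exactly \(\nu\) corresponds to a Hamiltonian cycle in \(G\). The natural idea: encode each edge \((u,v) \in E\) as a transition \(M_a(u,v) = 1\), so that a path of length \(n\) reading \(a^n\) that starts and ends at a fixed vertex \(v_1\) and has weight \(1\) would be a closed walk of length \(n\) — but a closed walk of length \(n\) visiting a start vertex is \emph{not} automatically a Hamiltonian cycle, since it could repeat vertices. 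The crux of the reduction is therefore forcing the ``visits every vertex exactly once'' condition. I would handle this by making the automaton count: assign to vertex \(v_i\) a weight contribution \(2^{i}\) (or some superincreasing sequence) collected multiplicatively along the path — e.g., route the automaton so that passing through \(v_i\) multiplies the accumulated weight by a distinct prime \(p_i\), or use a product construction with \(n\) parallel ``token'' coordinates. Then a walk of length \(n\) from \(v_1\) back to \(v_1\) has total weight equal to \(\prod_{i} p_i\) (the product over all vertices, taken with multiplicity), and this equals the squarefree value \(\nu \coloneqq \prod_{i=1}^n p_i\) if and only if each vertex is visited exactly once — i.e., the walk is a Hamiltonian cycle. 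Setting \(k = n\) and this \(\nu\) completes the reduction; both are clearly computable in polynomial time (and \(\nu\) has polynomial bit-size since the \(p_i\) are the first \(n\) primes).

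One subtlety to get right is that we want \emph{exactly} \(\nu\), not \(\geq \nu\): with a superincreasing/prime encoding, any walk that revisits a vertex overshoots \(\nu\) while any walk that misses a vertex falls short, so the equality constraint pins down Hamiltonicity precisely — this is exactly why proving the \emph{equality} version first (the BERP) is the clean route, and then Theorem~1 (BTRP is NP-complete) follows from it. The second subtlety is fixing the start vertex: since a Hamiltonian cycle exists iff one exists through any particular vertex, I would set the initial vector to select a single vertex \(v_1\) and the final vector to accept only at \(v_1\); alternatively, allow all start states and all accept states and argue the weight characterization still works (a closed walk visiting all vertices exactly once is Hamiltonian regardless of the designated start). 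I expect the main obstacle to be choosing the weight-accumulation gadget so that (i) it is faithfully multiplicative along paths, (ii) it uses only two alphabet symbols, and (iii) the distinguished threshold \(\nu\) is hit if and only if the walk is Hamiltonian and not merely a closed spanning walk; the prime-product (or Kronecker-product token) construction is the cleanest way I see to meet all three, and verifying its correctness — that \(W_{\cA_G}(a^n) = \sum_{\text{closed walks}} \prod p_{\text{visited}}\) and that this sum equals \(\nu\) iff \(G\) is Hamiltonian — is the technical heart of the proof.
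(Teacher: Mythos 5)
Your overall strategy --- prime-label the vertices, take the target value to be the product of the first $|V|$ primes, and use unique factorization to force ``each vertex visited exactly once'' --- is exactly the paper's key idea, and your NP-membership argument is fine. But the concrete reduction you commit to does not work, and the failure is in the step you yourself flag as the technical heart. If every edge is labelled by the \emph{same} symbol $a$ (so the candidate word is $a^{n}$), then by the WFA semantics $W_{\cA}(a^{n})$ is the \emph{sum} of the weights of \emph{all} length-$n$ paths reading $a^{n}$, not the weight of any single walk. Your claim that this sum equals $\nu = \prod_{i=1}^{n} p_i$ if and only if $G$ is Hamiltonian is false in both directions: a graph with a Hamiltonian cycle and any additional closed walk of length $n$ yields a weight strictly larger than $\nu$ (so Hamiltonicity does not imply weight exactly $\nu$), and conversely several non-Hamiltonian walks could in principle sum to $\nu$. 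Your per-walk analysis is correct, but it does not survive the summation over paths.

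The paper sidesteps this by giving each edge its \emph{own} alphabet symbol $\ell_{uv}$, with $M_{\ell_{uv}}$ having a single nonzero entry; then every word determines at most one path, the sum degenerates to a single term, and the equality $W_{\cA}(w) = p_{|Q|}\#$ genuinely certifies a Hamiltonian cycle. The two-letter alphabet is recovered only afterwards, by replacing each symbol $\ell_{uv}$ with a binary code word of length $\lceil \log_2 |E| \rceil$ over $\{a,b\}$ read through a chain of fresh auxiliary states, which preserves the one-word-one-path property. Your parenthetical alternative (a product construction with ``token coordinates'') might be made to work, but as written it is not developed, and the main construction you describe is the one that breaks.
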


\begin{proof}
  We first prove that the BERP is in NP. For this, it is essential to properly define the size of a problem.
  Let an instance $x$ of the BERP be given by \(x \coloneqq (1^k, \cA, \alpha)\) where $k \in \nat$,
  \(\cA\) is a WFA over the rational numbers and $\alpha \in \rationals$.
  We define the input size of $x$ as $|x| \coloneqq k + |\cA| + |\alpha|$, where $|\cA|$ and $|\alpha|$
  are the number of bits used to represent $\cA$ and $\alpha$ respectively.
  In these conditions, we can prove that the BERP is in NP by arguing that yes-instances can be
  efficiently verified.
  More formally, there exists a deterministic Turing machine $M$, running in polynomial time on the
  size of its first input and there exists a polynomial $p$, such that for all yes-instances $x$, there exists
  $y$, with $|y| < p(|x|)$ and such that $M(x,y) = 1$;
  and for every no-instance $x$ and all $y$ with $|y| < p(|x|)$, $M(x,y) = 0$.
  Here, $M$ can be the algorithm that parses $y$ as a word, computes its weight in $\cA$ and returns $1$
  iff the weight equals $\alpha$ and the number of alphabet symbols in $y$ is upper-bounded by $k$.

  In order to show that the BERP is NP-complete, we provide a reduction from the HCP.
  More concretely, let \(G = (V,E)\) be an instance of the HCP.
  We will construct an instance $x$ of the BERP (with weights in $\nat$) such that $x$ is a yes-instance if and only
  if $G$ contains a Hamiltonian cycle.
  To do so, we will construct a WFA with as many alphabet symbols as there are edges in $E$ and as many
  states as there are vertices in $V$.
  There will be a transition between two states if the corresponding vertices are joined by an edge in $G$
  and transitions have a unique alphabet symbol identifying them.
  More precisely,
  we construct the WFA \(\cA = (Q, \Sigma, \{M_a\}_{a\in\Sigma}, \initial, \final)\) where
  \(Q \coloneqq V\) and \(\Sigma \coloneqq \{\ell_{uv} \mid (u,v) \in E \}\).
  Let $\primes{n}$ be the set of the first $n$ prime numbers and
  let $\mu$ be an injective function $\mu : Q \rightarrow \primes{|Q|}$.
  For each \(\ell_{uv} \in \Sigma\), the transition matrix \(M_{\ell_{uv}}\) is defined as
  $\mu(v)$ at position $(u,v)$ and zero otherwise.
  Finally, we define \(\initial(u) \coloneqq 1\) and \(\final(u) \coloneqq 1 \) for each state \( u \in Q \).
  Note that transitions that point to the same state, share the same weight.

  Now, for $n \in \nat$, let $n\#$ denote the primorial of $n$ (the product of all prime numbers lower than or equal to $n$).
  Let $p_n$ be the $n$-th prime number.
  Let $x$ be the BERP instance defined as $x \coloneqq (1^{|Q|}, \cA, p_{|Q|}\#)$.
  Observe that, if $G$ has a Hamiltonian cycle, there is a word in $\cA$ of length $|Q|$ and weight exactly
  ${p_{|Q|}\#}$ (the word that results from concatenating the alphabet symbols associated to the edges of the Hamiltonian
  cycle) and so, $x$ is a yes-instance of the BERP language.
  On the other hand, if $x$ is a yes-instance, i.e., there exists a word in $\cA$ of weight ${p_{|Q|}\#}$ and length
  at most $|Q|$, we will argue that $G$ must contain a Hamiltonian cycle.
  This is because if such a word exists, it must be formed by exactly $|Q|$ symbols given the structure
  of $\cA$ and the fact that ${p_{|Q|}\#}$ admits a unique prime factorization.
  Moreover, given the injectiveness of $\mu$, those symbols correspond to transitions that point to all the states in $\cA$
  (and thus, to edges that point to all the nodes in $G$). Furthermore, the fact that the word's weight is non-zero
  implies that the edges corresponding to the transitions must form a path in $G$, but a path of length $|V|$ that visites
  all nodes exactly once must be a cycle.

  To conclude, we should make sure that $|x|$ is polynomial in the size of $|G|$ but that's the case because
  primes in $\primes{|Q|}$ can be represented with $2 \log{|Q|} = 2 \log{|V|}$ bits and so, $\cA$ can be represented by a polynomial number of
  bits. Furthermore, $\alpha \coloneqq {p_{|Q|}\#}$ can be represented with at most
  $2|Q|^2 = 2|V|^2$ bits. Therefore, we can conclude that $|x|$ is polynomial in the
  size of $|G|$.\footnote{These loose upper-bounds can be derived from the fact that
  $p_n < n (\log{n} + \log{\log{n}} )$ for $n > 5$ and that $\log{(n\#)} < n\left(1 + 1/(2\log{n})\right)$ for all $n \in \nat$~\cite{rosser1962}.}\\[-4ex]
\end{proof}

Also, observe that we could adapt the above reduction to produce a WFA with an alphabet of only two symbols, say $\Sigma = \{a,b\}$.
To do so, let $L \coloneqq \lceil\log_2{|E|}\rceil$ and assign to every edge $(u,v) \in E$ a unique identifier ${\sf id}_{uv} \in \{a,b\}^{L}$.
For each transition (alphabet symbol) in the automata from the above reduction, say $\ell_{uv}$, we can create
$L{-}1$ auxiliary states, and replace the transition by transitions between these auxiliary states forming a path, where each
transition is labeled according to the symbols in ${\sf id}_{uv}$ and all weights are defined as $1$, except for the last
one that keeps the weight of the original transition.
It is not hard to see that the reduction goes through, what implies that
the BERP is NP-complete even in the case of alphabets of two symbols.

Finally, we use a reduction from the BERP to prove that the BTRP is NP-complete as well.
We use a similar technique to the one used in the proof of undecidability of the Threshold Reachability problem
over probabilistic WFAs, given by Gimbert and Oualhadj~\cite{Gimbert2010}.
We adapt the technique in order to preserve integer weights.

\begin{theorem}
\label{thm:BTRP}
The BTRP with weights in $\integers$ is NP-complete.
\end{theorem}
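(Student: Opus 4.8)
The plan is to reduce the BERP (shown NP-complete in Theorem~\ref{thm:BERP}) to the BTRP, following the ``squaring trick'' of Gimbert and Oualhadj~\cite{Gimbert2010} but keeping all weights integral. The key observation is that an equality constraint \(W_\cA(w) = \nu\) can be turned into a threshold constraint by building a WFA whose weight function has a unique global maximum exactly at the words that satisfy the equality. First, given a BERP instance \((1^k, \cA, \nu)\) with weights in \(\nat\), I would shift the target to zero: let \(\cB \ud \cA \ominus \cC_\nu\), where \(\cC_\nu\) is a one-state WFA computing the constant \(\nu\) on every word of length \(k\) (so that \(W_\cB(w) = W_\cA(w) - \nu\)), using the \(\oplus \alpha\) construction from Section~\ref{app:operations} if a direct constant WFA is inconvenient. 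Then \(w\) witnesses the BERP instance iff \(|w| = k\) and \(W_\cB(w) = 0\).

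Next, to convert ``equals \(0\)'' into a threshold, I would use the product construction: consider \(\cB \otimes \cB\), which satisfies \(W_{\cB\otimes\cB}(w) = W_\cB(w)^2 \geq 0\) for all \(w\), with equality precisely when \(W_\cB(w)=0\). Taking \(\ominus(\cB\otimes\cB)\) gives a WFA \(\cD\) with \(W_\cD(w) = -W_\cB(w)^2 \le 0\), whose maximum value \(0\) is attained exactly at the words on which \(\cB\) vanishes. Hence the BTRP instance \((\cD, k, 0)\) is a yes-instance iff there is \(w \in \Sigma^{\le k}\) with \(W_\cD(w) \ge 0\), i.e.\ \(W_\cD(w) = 0\), i.e.\ \(W_\cB(w) = 0\) — but I must still force \(|w| = k\) rather than \(|w| \le k\). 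To handle lengths strictly below \(k\), I would either (a) pad: modify \(\cB\) so that only length-\(k\) words can attain weight \(0\) (for instance by adding to \(\cD\) a small strictly-negative contribution that depends on the length, again via \(\oplus\) of a length-counting gadget), or (b) observe that in the specific \(\cA\) produced by the reduction of Theorem~\ref{thm:BERP} every accepting path has length exactly \(k = |Q|\) when the weight hits the primorial, so the length issue is vacuous and I can simply pipe that automaton through the squaring construction. Option (b) is cleaner and keeps weights in \(\integers\) (indeed in \(\nat\) up to the final sign flip). Finally, membership of the BTRP in NP is immediate: a witness word can be guessed, its length checked against \(k\), and its weight computed and compared to \(\nu\) in polynomial time, exactly as in the NP argument for the BERP; NP-hardness follows from the reduction, and the construction preserves the two-symbol-alphabet restriction by composing it with the edge-encoding gadget from the remark after Theorem~\ref{thm:BERP}.

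The main obstacle I anticipate is the polynomial size bound: the product construction squares the number of states (\(|Q_\cB|^2\)) and, more delicately, squares the magnitude of the weights, so the bit-size of entries of \(\cD\) is \(O(|\cB|)\) — still polynomial, but one has to check carefully that iterating \(\oplus\)/\(\otimes\)/\(\ominus\) a constant number of times keeps everything polynomial in \(|G|\), reusing the primorial bound \(\log(n\#) < n(1 + 1/(2\log n))\) cited in the proof of Theorem~\ref{thm:BERP}. The secondary subtlety is the \(|w| = k\) versus \(|w| \le k\) mismatch between the BERP (fixed length) and the BTRP (bounded length); I expect resolving this cleanly — via the observation that the reduction WFA only realizes the target weight at length exactly \(|Q|\) — to be the one place where a short but nonobvious argument is needed.
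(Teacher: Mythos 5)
Your proposal is correct and follows essentially the same route as the paper: the paper also reduces from the BERP via the Gimbert--Oualhadj squaring trick, forming $(\cA \oplus 1 \ominus \alpha)\otimes(1 \oplus \alpha \ominus \cA)$, whose weight is $1-(W_{\cA}(w)-\alpha)^2$, and thresholding at $1$ --- identical up to an additive constant to your $-(W_{\cA}(w)-\nu)^2$ thresholded at $0$. The length mismatch you worry about does not arise in the paper, which treats the BERP as a length-at-most-$k$ problem throughout its proof of Theorem~\ref{thm:BERP} (and, as your option (b) observes, the reduction WFA only attains the target weight at length exactly $|Q|$ anyway).
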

\begin{proof}
  First, note that the BTRP is in NP since it admits a polynomial size certificate (that we can define in
  a similar way as in the previous proof).
  Now, let $(1^k, \cA, \alpha)$ be an instance of the BERP with weights in $\nat$.
  Let $\cB$ be the WFA defined as $(\cA \oplus 1 \ominus \alpha) \otimes (1 \oplus \alpha \ominus \cA)$ and observe that
  $\cB$ has integer weights.
  Furthermore, $\cB$ has a word of length bounded by $k$ and weight $\geq \!1$ if and only if $\cA$ has a word of length
  bounded by $k$ and weight exactly $\alpha$.
  This is because the function $f(t) \coloneqq (t {+} 1 {-} \alpha)(1 {+} \alpha {-} t)$ reaches it maximum value $1$ on $t = \alpha$.
  Finally, observe that the size of $\cB$ is polynomial in the size of $\cA$.
  Because the BERP with weights in $\nat$ is NP-complete (Theorem~\ref{thm:BERP}), the BTRP with weights in
  $\integers$ must be NP-complete.
\end{proof}

\section{Definition of \(\cA_E\)}
\label{app:construction}
In this section we give a definition of the WFA that recognizes the weighted regular language of function \(f\) (see Section~\ref{section:applications}).
Let \(\cA_E = (Q, \Sigma, \{M_a\}_{a\in \Sigma}, \initial, \final)\) with \(\Sigma \ud \{0, \ldots, 9, (, )\}, \initial \ud (1~0~0~0~0~0~1~1), \final \ud (-0.5~\mbox{-0.25}~~0~0.75~0~0~0.5~\mbox{-0.5})\) and 
\begin{displaymath}
M_(  \ud
\begin{bmatrix}
0 & 0 & 1 & 0 & 1 & 0 & 0 & 0\\[0.6pt]
0 & 0 & 1 & 0 & 0 & 0 & 0 & 0\\
0 & 0 & 0 & 0 & 0 & 0 & 0 & 0\\
0 & 0 & 0 & 0 & 1 & 0 & 0 & 0\\
0 & 0 & 0 & 0 & 0 & 1 & 0 & 0\\
0 & 0 & 0 & 0 & 0 & 0 & 0 & 0\\
0 & 0 & 0 & 0 & 0 & 0 & 0 & 0\\
0 & 0 & 0 & 0 & 0 & 0 & 0 & 0\\
\end{bmatrix},
\end{displaymath}

\begin{displaymath}
M_)  \ud
\begin{bmatrix}
0 & 0 & 0 & 0 & 0 & 0 & 0 & 0\\
0 & 0 & 0 & 0 & 0 & 0 & 0 & 0\\
0 & 1 & 0 & 0 & 0 & 0 & 0 & 0\\
0 & 0 & 0 & 0 & 0 & 0 & 0 & 0\\
0 & 0 & 0 & 1 & 0 & 0 & 0 & 0\\
0 & 0 & 0 & 0 & 1 & 0 & 0 & 0\\
0 & 0 & 0 & 0 & 0 & 0 & 0 & 0\\
0 & 0 & 0 & 0 & 0 & 0 & 0 & 0\\
\end{bmatrix}
\end{displaymath}
and
\begin{displaymath}
M_a  \ud
\begin{bmatrix}
0 & 1 & 0 & 1 & 0 & 0 & 0 & 0\\
0 & 1 & 0 & 0 & 0 & 0 & 0 & 0\\
0 & 0 & 1 & 0 & 0 & 0 & 0 & 0\\
0 & 0 & 0 & 1 & 0 & 0 & 0 & 0\\
0 & 0 & 0 & 0 & 1 & 0 & 0 & 0\\
0 & 0 & 0 & 0 & 0 & 1 & 0 & 0\\
0 & 0 & 0 & 0 & 0 & 0 & 0 & 0\\
0 & 0 & 0 & 0 & 0 & 0 & 0 & 1\\
\end{bmatrix}
\end{displaymath}
for every  \(a \in \{0, \ldots, 9\}\).
\bibliographystyle{ACM-Reference-Format}
\bibliography{main}

\end{document}